\PassOptionsToPackage{table,xcdraw}{xcolor}
\documentclass{article}
\usepackage{iclr2027_conference,times}
\iclrfinalcopy
\usepackage{amsmath,amsfonts,bm}

\def\eqref#1{equation~\ref{#1}}

\def\1{\bm{1}}

\DeclareMathAlphabet{\mathsfit}{\encodingdefault}{\sfdefault}{m}{sl}
\SetMathAlphabet{\mathsfit}{bold}{\encodingdefault}{\sfdefault}{bx}{n}

\usepackage{hyperref}
\usepackage{url}

\usepackage{bm}
\usepackage{array}     
\usepackage{booktabs}
\usepackage{amsfonts}
\usepackage{amsmath}
\usepackage{amssymb}
\usepackage{amsthm}
\usepackage{multirow}
\usepackage{nicefrac}
\usepackage{microtype}
\usepackage{xcolor}
\usepackage{graphicx}
\usepackage{enumitem}
\usepackage{mathtools}
\usepackage{graphicx}
\usepackage{subcaption}
\usepackage{threeparttable}
\usepackage{placeins}

\usepackage{algorithm}
\usepackage{algpseudocode}

\usepackage{titletoc}

\newtheorem{theorem}{Theorem}
\newtheorem{lemma}[theorem]{Lemma}

\newtheorem{corollary}[theorem]{Corollary}
\theoremstyle{definition}
\newtheorem{definition}[theorem]{Definition}
\theoremstyle{remark}
\newtheorem{remark}[theorem]{Remark}

\newcounter{assumption}
\renewcommand{\theassumption}{$\mathfrak{A}$\arabic{assumption}}

\newcommand{\asm}[1]{%
  \refstepcounter{assumption}%
  \textbf{\theassumption}%
  \label{#1}%
}

\title{Trust Your Targets: Is Target Propagation Block-Diagonal Gauss-Newton?}

\author{Giuseppe Castiglione, Christopher Buckley \& Ivor Simpson  \\
Department of Informatics\\
University of Sussex\\
\texttt{\{G.Castiglione,C.L.Buckley,I.Simpson\}@sussex.ac.uk} \\
}

\begin{document}

\maketitle

\begin{abstract}
Target propagation (TP) is a backpropagation alternative in which local targets carry the global learning signal. Prior works claim that TP performs Gauss-Newton (GN) optimisation on these targets, but the correspondence has only been established under unrealistic assumptions where hitting them exactly is trivial.  Practical variants instead descend on a least-squares problem via a single gradient step. In this paper, we rederive TP from a trust region under an implicit, previously unexamined norm, and introduce CGTP (Conjugate Gradient Target Propagation), which uses conjugate gradients to solve the layer-local subproblems in parallel. A single adaptive parameter jointly controls this trust region and a matching bound on the weight update; KFAC emerges exactly as the special case where the per-sample Jacobians are replaced by their batch mean, giving its factored damping a variational grounding. On MNIST and Fashion-MNIST MLPs, CGTP matches the accuracy of ADAM and KFAC while needing orders of magnitude fewer outer iterations than ADAM (about 10x fewer backward-pass-equivalents overall), while remaining stable across depth.

\end{abstract}

\section{Introduction}
\label{sec:intro}

End-to-end backpropagation paired with first-order optimizers \citep{Kingma2014AdamAM} is the de facto standard for training neural networks. Despite its remarkable effectiveness, backpropagation-based learning  imposes a sequential dependency whereby each layer's gradient depends on every subsequent layer's. Consequently, error signals must propagate fully backward before updates can begin, which is known as the \emph{update locking} problem \citep{Jaderberg2016DecoupledNI}. 
This dependency increasingly limits layer-parallel training \citep{wang2021infopro}, and has motivated a family of \emph{local learning} methods that allow layers to perform optimization steps independently against local objectives. These methods span a spectrum: greedy supervised approaches \citep{Belilovsky2018GreedyLL} achieve full layer independence but suffer from \emph{information collapse} \citep{wang2021infopro}. Conversely, predictive coding methods \citep{millidge2020predictive} simulate global feedback through iterative inference, justified by the free-energy principle \citep{Friston2009PredictiveCU}.  Target propagation (TP) \citep{Lee2014DifferenceTP} sits between these global communication extremes: local targets are inferred via a single pass through an approximate inverse, which in principle enables parallel layerwise optimization.  However, to realize the benefits of this decoupled optimization, the local objectives must be theoretically sound. While greedy and predictive-coding methods can be derived from information-theoretic or variational principles \citep{Tishby2015DeepLA, Friston2009PredictiveCU}, TP's primary theoretical justification is through a loose analogy to Gauss-Newton (GN) optimization \citep{Meulemans2020ATF} that is built on assumptions that frequently break down in practice.  

The existing proofs that connect TP to GN rely on several assumptions that diverge drastically from practice: they assume batch sizes of 1, single-layer updates, a regression objective, strictly invertible layers, and an \emph{infinitesimal} step size $\beta$. Two of these are particularly damaging. First, the infinitesimal-$\beta$ requirement defeats the purpose of a second-order method, which is sought \emph{for} its faster convergence.   Second, the assumption of strictly invertible layers forces practical TP variants to increasingly rely on inexact inverses, via autoencoders \citep{Lee2014DifferenceTP}, or in the extreme case fixed random feedback weights \cite{Shibuya2022FixedWeightDT}. This requires expensive feedback-weight training loops and more iterations to converge.  Relying on many propagation phases especially runs counter to the goals of a local algorithm: since the targets are derived through a propagation phase, we should endeavour to minimize the number of propagation phases altogether.

The result is a striking mismatch: TP is theoretically motivated as approximate GN, yet its dominant practical variants distance themselves from GN at every step. The local loss - the squared-error reconstruction objective each layer minimizes against its target - is treated as a means to an end: a single gradient step toward a target whose theoretical role is only loosely justified.  We argue that this is a missed opportunity: if TP is genuinely a form of Gauss-Newton, then taking the local subproblem seriously
- ie reaching the minimum - should yield the convergence advantages over first-order methods seen in modern BP-based second-order optimizers (e.g., K-FAC \citep{Martens2015OptimizingNN}, SHAMPOO \citep{shampoo}), but crucially, without the update-locking bottleneck. 

In this paper, we give TP a rigorous grounding as a sequence of local trust-region subproblems, which we then solve with a second-order method. Our specific contributions are as follows:
\begin{itemize}
  \item \textbf{(C1) A rigorous derivation of TP under the trust-region framework.} 
  Prior TP-GN equivalence proofs rely on a set of restrictive assumptions, typically left implicit or deferred to appendices.  We collect these explicitly (Table~\ref{tab:assumptions}), and relax each in turn.
 We derive TP from a linearly constrained least-squares subproblem via a linearized change of variables, accommodating  classification objectives, multi-layer updates, finite step sizes, and non-invertible layers (Sections~\ref{sec:loss}-\ref{sec:multi-layer}). Two decoupling theorems - one via norm choice, one via a simplicial representation - yield interpretable, local subproblemss whose unconstrained optima recover the regularized-inverse TP update.
  \item \textbf{(C2) A conjugate gradient solver, with KFAC as a special case.} KKT analysis of the constrained per-layer subproblem shows that the targets must be fit in the curvature-induced metric, not the Euclidean one, explaining why a naive second-order fit of the TP loss fails (Section~\ref{sec:KKT-conditions}). The null-space method reduces each layer to a positive definite system in the weights, solved in parallel by CG after a single backward pass (Section~\ref{sec:cg-alg}). Averaging the per-sample curvatures recovers KFAC with factored damping exactly (Corollary~\ref{cor:kfac}).
  \item \textbf{(C3) Proof-of-concept validation.} On MNIST and Fashion-MNIST, CGTP matches Adam, KFAC and the strongest  TP variants in accuracy with orders of magnitude fewer outer iterations (Table~\ref{tab:main-results}), though at higher cost per iteration (Table~\ref{tab:resource-comparison}), while remaining stable across depth (Figure~\ref{fig:depth-ablation}).
\end{itemize}

\section{Identifying the Theoretical Foundations of Target Propagation}\label{sec:prelim}

\subsection{Preliminary: Classical Target Propagation}\label{sec:conventional-tp}

In this section, we briefly review the common presentation of TP.  Consider a dataset $\mathcal{D} = \{(x_n, y_n)\}_{n=1}^N$ of $N$ samples. We fit an $L$-layer feedforward multilayer perceptron (MLP) computing:
\begin{equation}
u^{(0)}_n = x_n; \quad z^{(\ell+1)}_n = W^{(\ell)\top}u^{(\ell)}_n + b^{(\ell)}_n; \quad  u^{(\ell+1)} = \sigma(z^{(\ell+1)})    
\end{equation}
The output $z^{(\mathrm{out})}_n = W^{(L)\top} u^{(L)}_n + b^{(L)}$ is trained to minimize the per-sample loss $\mathcal{L}_n(z^{(\mathrm{out})}_n, y_n)$. To assign credit, TP begins by defining a target for the output via a gradient step on the loss:
\begin{equation}
    \hat{z}^{(\mathrm{out})}_n(\beta) = z^{(\mathrm{out})}_n - \beta \frac{\partial \mathcal{L}_n}{\partial z^{(\mathrm{out})}} \coloneqq  z^{(\mathrm{out})}_n - \beta r_n, \label{eq:infer-head-target}
\end{equation}
where $\beta$ is a step size and $r_n$ denotes the (generalized) residual. The composite mapping $f^{(\ell)}:z^{(\ell)}\to z^{(\ell+1)}$ are then approximately inverted to compute intermediate targets $\hat{z}^{(\ell)}_n(\beta) = f^{(\ell)+}(\hat{z}^{(\ell+1)}_n(\beta))$.  These targets define a layer-local regression problem, from which parameter gradients are derived:
\begin{equation}
    L^{(\ell)}_{\text{TP}}(W^{(\ell)}, b^{(\ell)}) = \frac{1}{\beta}\sum_n ||  W^{(\ell)\top}u^{(\ell)}_n + b^{(\ell)}_n - \hat{z}^{(\ell+1)}_n(\beta)||^2 \label{eq:tp-local-loss}
\end{equation}

\subsection{Classical Target Propagation as Hybrid Gauss Newton}\label{sec:naive-gn}

\begin{table}[t]
  \caption{Assumptions invoked by prior proofs of the TP-GN equivalence}
  \label{tab:assumptions}
  \centering
  \small
  \begin{tabular}{ll}
    \toprule
    \textbf{Assumption} & \textbf{Relaxed in} \\
    \midrule
       \asm{asm:step}: Infinitesimal step size $\beta$ &  Section~\ref{sec:trust-region}: Trust-region radius \\
       \asm{asm:loss}: $\ell_2$ regression loss  & Section~\ref{sec:loss}: Generalized surrogate loss \\
    \asm{asm:single}: Single-layer updates & Section~\ref{sec:multi-layer}: Norm-penalties \\
    \asm{asm:invert}: Strictly invertible layers & Section~\ref{sec:multi-layer}: Norm-penalties, Section~\ref{sec:simplicial-loss}: Simplicial decoupling \\
    \asm{asm:batch}: Mini-batch of size 1  & Section~\ref{sec:KKT-conditions}: Constrained activation space \\
    \bottomrule
  \end{tabular}
\end{table}

The connection between the GN method and TP, as presented in such works as \citep{JaderbergCOVGK16, Meulemans2020ATF, Roulet2021TargetPV} requires  several assumptions to draw formal equivalence, which we collect in Table~\ref{tab:assumptions}. Let $F^{(\ell',\ell)}$ denote the composite map from $z^{(\ell)}$ to $z^{(\ell')}$, with Jacobian $J^{(\ell',\ell)}$. GN is a second-order method applied to least-squares problems (\ref{asm:loss}).  Consider the following objective, for inferrring the optimal target of a single (\ref{asm:single}) intermediate layer $z^{(l)}_n$ for a single datapoint (\ref{asm:batch}):
\begin{equation}
  \mathcal{L}^{(\ell)}_n = \frac{1}{2} \big\| \hat{z}^{(\mathrm{out})}_n - F^{(\mathrm{out}, \ell)}(z^{(\ell)}_n) \big\|^2,
  \label{eq:gn-objective-z}
\end{equation}
A GN step proceeds by linearizing $F^{(\mathrm{out},\ell)}$ around the current $z^{(\ell)}_n$ and solving the resulting linear least-squares problem. Writing the residual as $r_n = F^{(\mathrm{out},\ell)}(z^{(\ell)}_n) - \hat{z}^{(\mathrm{out})}_n$, the GN update is
\begin{equation}
  [\Delta z^{(\ell)}_{\text{GN}}]_n = \big( J^{(\mathrm{out},\ell)\top} J^{(\mathrm{out},\ell)} \big)^{+} J^{(\mathrm{out},\ell)\top} r.
  \label{eq:gn-step-z}
\end{equation}
If $F$ is invertible (\ref{asm:invert}), then so is the Jacobian, and the above simplifies to:
\begin{equation}
  [\Delta z^{(\ell)}_{\text{GN}}]_n =[J^{(\mathrm{out},\ell)}_n]^{-1} r_n.
  \label{eq:gn-step-z-invertible}
\end{equation}

Let us return to the TP loss from Equation~\ref{eq:tp-local-loss}, considered as a function of the intermediate activation $z^{(\ell)}_n$.  In the limit that the step-size $\beta \to 0$ (\ref{asm:step}), a Taylor expansion on the TP gradient yields:
\begin{align}
[\Delta z^{(\ell)}_n]_{\mathrm{TP}}
&= z^{(\ell)}_n - [F^{(out, \ell)}]^{-1}(z^{(out)}_n - \beta r_n) \\
&\approx z^{(\ell)}_n - [F^{(out, \ell)}]^{-1}(z^{(out)}_n) + \beta \frac{\partial [F^{(out, \ell)}]^{-1}(z^{(out)}_n)}{\partial z^{(out)}_n} r_n = \beta\frac{\partial [F^{(out, \ell)}]^{-1}(z^{(out)}_n)}{\partial z^{(out)}_n} r_n
\end{align}
By the inverse function theorem, the Jacobian of $G$ is the inverse of the Jacobian of $F$.  As such:
\begin{equation}
    [\Delta z^{(\ell)}_n]_{\mathrm{TP}} =  \beta [J^{(out, \ell)}_n]^{-1} r_n = \beta [\Delta z^{(\ell)}_{\text{GN}}]_n \label{eq:tp-is-gn},
\end{equation}
which demonstrates the equivalence.  Crucially, this is \emph{not} a GN step on the weights. TP as presented in \citep{Meulemans2020ATF} is a hybrid algorithm: a GN step infers local targets, and a gradient descent step converts those targets into a weight update. The theory for a full TP-based GN update on the weights~\citep{BengioIteratingApprox, Meulemans2020ATF} has been developed only in the single-datapoint case (\ref{asm:batch}), where any inferred target
is exactly realizable and the TP step coincides with a full GN step up to a
positive scalar. The remainder of this paper relaxes each of these assumptions in
turn.

\section{Generalising Target Propagation via Trust-Region Methods}\label{sec:trust-region}

The classical presentation of TP uses a single parameter ($\beta$) to control both the the approximation fidelity and step size.  Trust-Regions (TR) methods separate these roles, permitting accelerated optimization even with poor approximations.  One constructs a local quadratic model $m$ of the objective, takes a step that approximately minimizes $m$ within a radius $\Delta$, and then accepts/rejects the step based on the ratio $\rho$ between the predicted reduction (in $m$) and the reduction of the true loss. The radius adapts based on $\rho$: a successful step extends it, a failed step contracts it. 

In what follows we generalise TP within this framework, and relax the remaining assumptions in the proof of  Section~\ref{sec:naive-gn}. Section~\ref{sec:loss} identifies the loss that TP approximates, clarifying its role as a surrogate for general loss functions (\ref{asm:loss}), and how its fidelity is measured through the trust-region ratio $\rho$. Section~\ref{sec:multi-layer} demonstrates that a principled choice of trust-region metric naturally accomodates the multi-layer setting (\ref{asm:single}) and non-invertible layers (\ref{asm:invert}).

\subsection{Relaxing \ref{asm:loss} via Convex Surrogates}\label{sec:loss} 

Adopting the Trust Region framework for TP requires identifying \textit{which} objective it is trying to approximate.  We propose that the TP loss in Equation~\ref{eq:tp-local-loss} can be viewed as a surrogate of the true objective $\mathcal{L}$, insofar as (see Supp~\ref{sec:refined-tp}):\begin{equation}
    ||\nabla_{z^{(\ell)}} \mathcal{L}_{\text{TP}}||^2 = 0 \implies ||\nabla_{z^{(\mathrm{out})}} \mathcal{L}||^2 = 0
\end{equation}
The key observation is that the objectives used to train neural networks are commonly convex functions of the outputs.  As such, the vanishing of $||\nabla_{z^{(out)}} \mathcal{L}||$ is a sufficient condition for optimality of the network, motivating the following general surrogate loss:
\begin{equation}
    \mathcal{J}_n = ||\nabla_{z^{(out)}} \mathcal{L}(z^{(out)}(x_n, y_n; \theta)||_{M_n}^2 \label{eq:surrogate-func}
\end{equation}
Here, $M_n \succeq 0$ denotes a choice of norm appropriate to the problem.    Letting $r_n \coloneqq \nabla_{z^{(out)}} \mathcal{L}_n$ denote the generalized residual, we consider the result of applying a perturbation $\Delta z^{(\ell)}_n$ to the pre-activation of layer $\ell$ at the point $n$.  Taylor expanding, we have:
\begin{equation}
    r_n(z^{(\ell)}_n + \Delta z^{(\ell)}_n) \approx r_n + H_n J^{(out,\ell)}_n \Delta z^{(\ell)}_n; \quad H_n\coloneqq \nabla_{z^{(out)}_n}^2 \mathcal{L}_n\label{eq:output-hess-defn}
\end{equation}
We then obtain the following subproblem:

\begin{definition}[Generalized Gauss-Newton Subproblem for the Pre-Activations]
\label{def:single-pt-single-layer-subproblem}  For a general objective $\mathcal{L}$, the surrogate loss $\mathcal{J}_n$ may be approximated as a weighted least-squares subproblem:
\begin{equation}
    \mathcal{J}_n \approx 
     ||M_n^{1/2}(r_n + H_n J^{(out,\ell)}_n \Delta z^{(\ell)}_n)||_{}^2 \coloneqq m_n(\Delta z^{(\ell)}_n) \label{eq:single-pt-single-layer-subproblem}
\end{equation}
The improvement ratio $\rho$ is then calculated by comparing $\Delta \mathcal{J}_n$ (measured) and $\Delta m_n$ (predicted).
\end{definition}

\subsection{Relaxing \ref{asm:single} and \ref{asm:invert} via a Principled Choice of Trust-Region Metric}\label{sec:multi-layer}

Prior works generalize the single-layer result of Definition~\ref{def:single-pt-single-layer-subproblem} to the multi-layer setting by appealing to the common empirical approximation of a block-diagonal Gauss-Newton matrix \citep{Meulemans2020ATF}. In this section, we show that achieving layer decoupling requires no such approximation - only a principled choice of metric.  The key observation is that the multi-layer problem is typically underdetermined.  Let $J_n$ denote the stacked Jacobian and  $\Delta z_n$ the stacked perturbation, as follows:
\begin{equation}
  J_n \coloneq \big[\, I \;\big|\; J^{(\mathrm{out}, L)}_n \;\big|\; \cdots \;\big|\; J^{(\mathrm{out}, 1)}_n \,\big]; \quad \Delta z_n \coloneq \big[\, \Delta z^{(\mathrm{out})\top}_n \;\big|\; \Delta z^{(L)\top}_n \;\big|\; \cdots \;\big|\; \Delta z^{(1)\top}_n \,\big]^\top
  \label{eq:stacked-vars}
\end{equation}
Henceforth we drop the $\mathrm{out}$, denoting $J_n^{(\ell)} \coloneqq J^{(\mathrm{out}, \ell)}_n$.  The natural generalization of Definition~\ref{def:single-pt-single-layer-subproblem} is:
\begin{equation}
  m_n(\Delta z_n) = \;\big\| r_n + H_n J_n \Delta z_n \big\|_{M_n}^2 \label{eq:multi-layer-target-prop}
\end{equation}
Note that, if each $\Delta z^{(\ell)}_n \in \mathbb{R}^K$, the joint perturbation across layers introduces $(L+1)K$ unknowns against only $C \ll K$ residuals, and so there are infinitely many solutions.  In the trust region setting, a unique solution may be fixed via a norm constraint $||\Delta z_n||_{B_n} < \Delta$, for choice of metric $B_n$.  The following theorem highlights a specific family of norms that natively decouples the layers:

\begin{theorem}[Egalitarian Norm-Induced Layer Separation]
\label{thm:uniform-decoupling-property}

Let $\lambda$ be a Lagrange multiplier, and
consider the penalty form of the norm constraint for metric $B_n$:
\begin{equation}
    m_n(\Delta z_n; \lambda) = \tfrac{1}{2}\big\| r_n + H_n J_n \Delta z_n \big\|_{M_n}^2 + \tfrac{\lambda }{2}\Delta z_n^{\top}B_n\Delta z_n  \qquad B_n =  \bigoplus_{\ell=1}^{L+1} B^{(\ell)}_n\label{eq:regularised-multilayer-subproblem}
\end{equation}
Then, if the metric is egalitarian (ie satisfies $\sum_{\ell} J^{(out,l)}_n [B^{(l)}_n]^{-1} [J^{(out,l)}]_n^{\top} = I$), the unconstrained solution for the layer perturbation is:
\begin{equation}
    \Delta z^{(\ell)}_n = -\tfrac{1}
    {L} J^{(\ell)+}_n\tilde{r}_n(\lambda) = -\tfrac{1}
    {L}[B^{(\ell)}_n]^{-1}J^{(\ell)\top}_n\tilde{r}_n(\lambda)
    \label{eq:regularized-target-def}
\end{equation}
Where $\tilde{r}_n(\lambda)$ denotes the regularized target, and is given by:
\begin{equation}
    \tilde{r}_n(\lambda) = \big(H_n^{\top}M_nH_n + \lambda I\big)^{-1}H_n^{\top}M_n r_n \label{eq:regularized-residual-def}
\end{equation}
\end{theorem}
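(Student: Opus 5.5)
The plan is to solve the penalized problem in equation~\ref{eq:regularised-multilayer-subproblem} directly through its first-order conditions. The block-diagonal structure of $B_n$ lets us push the system through the stacked Jacobian $J_n$, and the egalitarian condition collapses the result to a small $C\times C$ system in output space. Since the objective is a convex quadratic in $\Delta z_n$ (strictly convex for $\lambda>0$ and $B_n\succ 0$), stationarity characterizes the unique minimizer. Setting the gradient to zero gives
\begin{equation*}
J_n^\top H_n^\top M_n\big(r_n + H_n J_n\Delta z_n\big) + \lambda B_n \Delta z_n = 0,
\end{equation*}
so $\Delta z_n = -\tfrac{1}{\lambda} B_n^{-1} J_n^\top H_n^\top M_n (r_n + H_n s_n)$, where $s_n \coloneqq J_n\Delta z_n$ is the induced output-space perturbation.

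The key step is to left-multiply this expression by $J_n$. Because $B_n=\bigoplus_\ell B_n^{(\ell)}$, we have $J_n B_n^{-1} J_n^\top = \sum_\ell J^{(\ell)}_n [B^{(\ell)}_n]^{-1} J^{(\ell)\top}_n$, which equals $I$ by the egalitarian hypothesis. This yields the closed system $\lambda s_n = -H_n^\top M_n r_n - H_n^\top M_n H_n s_n$. Hence
\begin{equation*}
s_n = -\big(H_n^\top M_n H_n + \lambda I\big)^{-1} H_n^\top M_n r_n = -\tilde r_n(\lambda),
\end{equation*}
which is exactly the regularized target of equation~\ref{eq:regularized-residual-def}. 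Substituting back, we use the identity $H_n^\top M_n(r_n - H_n\tilde r_n) = \lambda \tilde r_n$, which follows by rearranging $(H_n^\top M_n H_n+\lambda I)\tilde r_n = H_n^\top M_n r_n$. This gives $\Delta z_n = -B_n^{-1}J_n^\top \tilde r_n$. Reading off block $\ell$ then gives $\Delta z^{(\ell)}_n = -[B^{(\ell)}_n]^{-1}J^{(\ell)\top}_n\tilde r_n$. So each layer receives a perturbation computed only from its own Jacobian and the shared regularized residual, which is the claimed decoupling.

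The remaining step, which I expect to be the main obstacle, is reconciling the $\tfrac1L$ factor and the pseudoinverse form in equation~\ref{eq:regularized-target-def}. The approach is to interpret $J^{(\ell)+}_n$ as the $B^{(\ell)}_n$-weighted pseudoinverse $[B^{(\ell)}_n]^{-1}J^{(\ell)\top}_n\big(J^{(\ell)}_n[B^{(\ell)}_n]^{-1}J^{(\ell)\top}_n\big)^{-1}$. One would then take the egalitarian condition in its symmetric form, where each of the $L$ layer terms contributes $J^{(\ell)}_n[B^{(\ell)}_n]^{-1}J^{(\ell)\top}_n = \tfrac1L I$, so that they sum to $I$. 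Under that reading, $J^{(\ell)+}_n = L\,[B^{(\ell)}_n]^{-1}J^{(\ell)\top}_n$, and the derived update equals $-\tfrac1L J^{(\ell)+}_n\tilde r_n$. This matches the first equality in equation~\ref{eq:regularized-target-def} and displays the update as an equal $1/L$ share of a regularized-inverse TP step.

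This bookkeeping needs care on two points. First, the output block ($J=I$, with its own metric) must be handled consistently when counting the $L$ versus $L+1$ shares. Second, the second expression in equation~\ref{eq:regularized-target-def}, which also carries the explicit $\tfrac1L$, holds only up to this normalization of $B^{(\ell)}_n$, that is, after rescaling $B^{(\ell)}_n$ by $L$. I would state this convention explicitly in the proof.
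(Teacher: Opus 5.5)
Your proof is correct and is essentially the paper's argument. You left-multiply the stationarity condition by $J_nB_n^{-1}$, use $J_nB_n^{-1}J_n^{\top}=I$ to close the system in output space and obtain $J_n\Delta z_n=-\tilde r_n(\lambda)$, and then recover $\Delta z_n=-B_n^{-1}J_n^{\top}\tilde r_n(\lambda)$. The paper differs only in two small ways: it first treats a general $S_n=J_nB_n^{-1}J_n^{\top}$ before setting $S_n=I$, and it reaches the final form via the ansatz $\Delta z_n=-B_n^{-1}J_n^{\top}w$ plus uniqueness rather than your direct back-substitution.

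For the $\tfrac1L$, the paper uses the concrete choice $B^{(\ell)}_n=\tfrac{1}{\alpha^{(\ell)}_n}J^{(\ell)\top}_nJ^{(\ell)}_n$ with uniform $\alpha^{(\ell)}_n$, so that $[B^{(\ell)}_n]^{-1}J^{(\ell)\top}_n=\alpha^{(\ell)}_nJ^{(\ell)+}_n$. This agrees with your reading and confirms both of your caveats: the second displayed expression carries a spurious extra $\tfrac1L$, and the uniform share is really $(L+1)^{-1}$ once the output block is counted.
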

\begin{proof}
    The identiy block in $J_n$ makes it surjective (Lemma~\ref{lem:surjectivity-of-J}), so that for any $B_n$ the 
    the normal equations factor as $(\mathrm{operator}) J_n \Delta z_n = (\mathrm{target})$.  The egalitarian block requirement is then chosen to coincide with the classical TP update.  Full details in Supp~\ref{appendix:proof-of-uniform-norm}. 
\end{proof}

\begin{remark}[Recovering Classical TP as the Low-Trust Limit in an Egalitarian Newton Norm] As $\lambda \to \infty$, choosing penalty norm $M_n = H_n^{-1}$, the regularized target of \ref{eq:regularized-target-def} approaches the Classical TP target of \ref{eq:tp-is-gn} with $\beta = \lambda^{-1}$.

\end{remark}

\begin{remark}[Interpretation of the Egalitarian Norm]
Theorem~\ref{thm:uniform-decoupling-property} has a natural reading: no single layer should dominate the others in its contribution to the regularized objective. This is structurally similar to the scale-invariant diagonal damping used in classical Levenberg-Marquardt (LM) \citep{nocedal2006numerical}.  Furthermore, the pseudo-inverse leaves unperturbed those direction of the pre-activations that the layer discards.  Supp~\ref{appendix:proofs:non-invertible} makes this construction explicit for the output layer.
\end{remark}

\section{Beyond Classical TP: Second-Order Methods on the Local Subproblem}
\label{sec:second-order-tp}

Having relaxed Assumptions~\ref{asm:step}-\ref{asm:invert}, we now address \ref{asm:batch}, and the prospect of deriving GN updates from the TP loss.  The hybrid route of prior works (a single gradient step on the weights) is naturally read as the low-trust regime of our framework.  Where previous approaches assess (and adapt) approximate inverses through reconstruction, the trust-region formulation calibrates trust through \emph{utility}.  A second-order step is licensed when reductions in the subproblem align with reduction in the true loss.  It then seems a simple matter to apply a second-order optimiser to~\eqref{eq:tp-local-loss}: the Hessian is block diagonal, and the Newton step yields the familiar least-squares fit.  We will show in this section that doing so is not straightforward. The primary difficulty is one of feasibility:~\eqref{eq:tp-local-loss} assesses attainability of the local targets a Euclidean norm.  When \ref{asm:batch} holds, the targets are exactly realisable and so the choice of metric is irrelevant.  For larger batch sizes, this choice of proves decisive.  

To this end, in Section~\ref{sec:KKT-conditions}, we extend our trust-region subproblem to incorporate feasibility constraints.  The KKT conditions reveal the necessity of measuring fit using the metric induced by curvature.  The numerical cost of the KKT solution motivates a majorization-minimization decomposition of that metric (Section~\ref{sec:simplicial-loss}), yielding new local losses that generalize~\eqref{eq:tp-local-loss} and admit second-order optimizers (Section~\ref{sec:cg-alg}). We then damp the weight updates to combat overfitting (Section~\ref{sec:weight-decay}), recovering KFAC as a corrollary when samples are replaced by their batch average.

\subsection{KKT Conditions for the Constrained Subproblem}
    \label{sec:KKT-conditions}

The natural generalisation of the subproblem of Equation~\ref{eq:multi-layer-target-prop} is to simply sum $m_n$ across datapoints. This constitutes $N$
independent least-squares problems, whose minimisers are available in closed form via Theorem~\ref{thm:uniform-decoupling-property}. However, such solutions ar not necessarily feasible: the model architecture constrains the space of pre-activations to satisfy $\Delta Z^{(\ell)} = \tilde{U}^{(\ell - 1)}\Delta W^{(\ell-1)}$ for some $\Delta W^{(l-1)}$ (here, $\sim$ indicates augmentation by ones).  Note, the constraint requires only that $\Delta W^{(\ell-1)}$ exists: it does not commit us to any particular choice.  Hence, to retain the pre-activations as the principal optimisation variables, we must incorporate the following constraint:

\begin{lemma}[Feasibility of perturbations (layer version)]
\label{lem:feasibility}
Let $\tilde{U}^{(\ell-1)} \in \mathbb{R}^{N \times d_{\ell-1}}$ and $\Delta Z^{(\ell)} \in \mathbb{R}^{N \times d_\ell}$. There exists $\Delta W^{(\ell-1)} \in \mathbb{R}^{d_{\ell-1} \times d_\ell}$ satisfying $\Delta Z^{(\ell)} = \tilde{U}^{(\ell-1)} \Delta W^{(\ell-1)}$ if and only if each column of $\Delta Z^{(\ell)}$ lies in $\mathrm{col}(\tilde{U}^{(\ell-1)})$. Equivalently,
\begin{equation}
  \Pi^{(\ell-1)} \, \Delta Z^{(\ell)} = 0,
  \label{eq:feasibility}
\end{equation}
where $\Pi^{(\ell-1)}$ is the orthogonal projector onto $\mathrm{col}(\tilde{U}^{(\ell-1)})^\perp$.
\end{lemma}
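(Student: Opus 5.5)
The plan is to prove the two equivalences separately: existence of $\Delta W^{(\ell-1)}$ is equivalent to the column condition, and the column condition is equivalent to the projector identity \eqref{eq:feasibility}. Both are finite-dimensional linear algebra; nothing earlier in the paper is needed beyond the definition of $\tilde{U}^{(\ell-1)}$ as the ones-augmented activation matrix.

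\emph{Step 1 (existence $\Leftrightarrow$ column condition).} Write $\Delta Z^{(\ell)} = [\,\delta_1 \mid \cdots \mid \delta_{d_\ell}\,]$ and $\Delta W^{(\ell-1)} = [\,w_1 \mid \cdots \mid w_{d_\ell}\,]$. The matrix equation $\Delta Z^{(\ell)} = \tilde{U}^{(\ell-1)}\Delta W^{(\ell-1)}$ holds if and only if $\delta_j = \tilde{U}^{(\ell-1)} w_j$ for every $j$. This is because matrix multiplication acts column by column, and the columns $w_j$ can be chosen independently of one another.
\begin{itemize}[leftmargin=*]
\item If $\Delta W^{(\ell-1)}$ exists, each $\delta_j$ lies in $\mathrm{col}(\tilde{U}^{(\ell-1)})$ by the definition of column space.
\item Conversely, if each $\delta_j \in \mathrm{col}(\tilde{U}^{(\ell-1)})$, pick any preimage $w_j$ (for concreteness $w_j = \tilde{U}^{(\ell-1)+}\delta_j$) and stack these as the columns of $\Delta W^{(\ell-1)}$.
\end{itemize}
The construction in the second bullet is exactly the paper's remark that the constraint asserts only that some $\Delta W^{(\ell-1)}$ exists. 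It does not single one out, since any element of $\ker \tilde{U}^{(\ell-1)}$ can be added to each $w_j$.

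\emph{Step 2 (column condition $\Leftrightarrow$ $\Pi^{(\ell-1)}\Delta Z^{(\ell)}=0$).} Use the orthogonal decomposition $\mathbb{R}^N = \mathrm{col}(\tilde{U}^{(\ell-1)}) \oplus \mathrm{col}(\tilde{U}^{(\ell-1)})^\perp$. For the orthogonal projector $\Pi^{(\ell-1)}$ onto the second summand, $\ker \Pi^{(\ell-1)} = \mathrm{col}(\tilde{U}^{(\ell-1)})$. So $\Pi^{(\ell-1)}\delta_j = 0$ holds if and only if $\delta_j \in \mathrm{col}(\tilde{U}^{(\ell-1)})$. Applying this to every column gives $\Pi^{(\ell-1)}\Delta Z^{(\ell)} = 0$. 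One can also make this explicit by writing $\Pi^{(\ell-1)} = I - \tilde{U}^{(\ell-1)}\tilde{U}^{(\ell-1)+}$ and checking that $I - \Pi^{(\ell-1)}$ is the projector onto the column space, using the Moore--Penrose identities.

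\emph{Main obstacle.} No step is genuinely hard; the only care needed is bookkeeping. First, the shapes in the statement must be made consistent: the augmented matrix $\tilde{U}^{(\ell-1)}$ has $d_{\ell-1}+1$ columns, so $\Delta W^{(\ell-1)}$ should absorb the bias row. Second, the argument must not assume that $\tilde{U}^{(\ell-1)}$ has full column rank. The pseudo-inverse choice in Step 1 is what handles the rank-deficient case, which is the relevant one when $N < d_{\ell-1}$ or when activations are degenerate.
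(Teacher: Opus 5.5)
Your proof is correct and follows the paper's own argument: the matrix equation splits column-wise into $\delta_j = \tilde U^{(\ell-1)} w_j$, each of these is solvable exactly when $\delta_j \in \mathrm{col}(\tilde U^{(\ell-1)})$, and that column space is the kernel of the orthogonal projector onto its complement. The explicit pseudo-inverse preimage and the formula $\Pi^{(\ell-1)} = I - \tilde U^{(\ell-1)}\tilde U^{(\ell-1)+}$ are harmless extras, and your shape concern does not arise, because the statement already takes $d_{\ell-1}$ to be the number of columns of the augmented $\tilde U^{(\ell-1)}$.
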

\begin{proof}
The equation $\Delta Z^{(\ell)} = \tilde{U}^{(\ell-1)} \Delta W^{(\ell-1)}$ holds column-wise: each column of $\Delta Z^{(\ell)}$ is a linear combination of the columns of $\tilde{U}^{(\ell-1)}$ with coefficients given by the corresponding column of $\Delta W^{(\ell-1)}$. Such a linear combination exists if and only if the column lies in $\mathrm{col}(\tilde{U}^{(\ell-1)})$, which is equivalent to vanishing under projection onto its orthogonal complement.
\end{proof}

\begin{remark}[Classical TP as Projected Gradient Descent] The classical TP weight update, when applied to the inputs $U^{(\ell-1)}$, is effectively a projection of the targets into $\mathrm{col}(\tilde{U}^{(\ell-1)})$ via $\Delta\bar{Z}^{(\ell)}_{\textrm{PGD}}
  \propto \, \tilde{U}^{(\ell-1)} \tilde{U}^{(\ell-1)\top} \Delta Z^{(\ell)}$ (Supp~\ref{appendix:proof:tp-as-pgd}).
\end{remark}
Lemma~\ref{lem:feasibility} holds at the level of a single layer. In what follows
we require a constraint acting on all layers at once, after stacking across datapoints.  This will keep the per-sample subproblems block-diagonal, but distributes the layer constraint across the stacked vector.  This new constraint is expressed in the following Lemma (proof in Supp~\ref{sec:shared-feasibility}):
\begin{lemma}[Feasibility of perturbations (stacked version)]\label{lemma:stacked-perturbation-contraint}
Let the stacked pre-activations be denoted $\Delta \bm{z} = (\Delta z_1, \dots, \Delta z_N) \in \mathbb{R}^{ND}$, and let $E^{(\ell)}$ be defined such that $\Delta z_n^{(\ell)} = E^{(\ell)} \Delta z_n$.  Then the layer feasibility operators of Lemma~\ref{lem:feasibility} may stated as $\bm{\Pi}\Delta \bm{z} =0$, where
\begin{equation}
\bm{\Pi} = \sum_{\ell} \Pi^{(\ell-1)} \otimes E^{(\ell)\top}E^{(\ell)}.
\end{equation}
\end{lemma}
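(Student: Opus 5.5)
The plan is to verify the identity by expressing both sides in one coordinate system and checking that they agree on every layer block. Fix the ordering convention from Lemma~\ref{lemma:stacked-perturbation-contraint}: $\Delta\bm z\in\mathbb{R}^{ND}$ stacks the per-sample vectors $\Delta z_n\in\mathbb{R}^D$, so $\Delta\bm z=\sum_n e_n\otimes\Delta z_n$ with $e_n\in\mathbb{R}^N$. The layer matrix $\Delta Z^{(\ell)}\in\mathbb{R}^{N\times d_\ell}$ of Lemma~\ref{lem:feasibility} has $n$-th row $(E^{(\ell)}\Delta z_n)^\top$. In vectorised form, $\mathrm{vec}_r(\Delta Z^{(\ell)}) = (I_N\otimes E^{(\ell)})\Delta\bm z$, where $\mathrm{vec}_r$ stacks the rows of a matrix. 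The $E^{(\ell)}$ are selection matrices onto disjoint coordinate blocks, so $E^{(\ell)}E^{(\ell)\top}=I_{d_\ell}$, $E^{(\ell)}E^{(k)\top}=0$ for $k\neq\ell$, and $\sum_\ell E^{(\ell)\top}E^{(\ell)}\preceq I_D$. This last relation is an equality when all blocks, including the output block, are included. For an output block with no feasibility constraint we set its projector to $0$.

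First, I would rewrite the layerwise constraint (\ref{eq:feasibility}) in this vectorisation. Using $\mathrm{vec}_r(AX)=(A\otimes I)\mathrm{vec}_r(X)$, the condition $\Pi^{(\ell-1)}\Delta Z^{(\ell)}=0$ becomes $(\Pi^{(\ell-1)}\otimes I_{d_\ell})(I_N\otimes E^{(\ell)})\Delta\bm z = (\Pi^{(\ell-1)}\otimes E^{(\ell)})\Delta\bm z=0$. Applying the isometric lift $I_N\otimes E^{(\ell)\top}$ turns this into the equivalent condition $\bm\Pi_\ell\Delta\bm z=0$, where $\bm\Pi_\ell\coloneqq\Pi^{(\ell-1)}\otimes E^{(\ell)\top}E^{(\ell)}$. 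This gives one direction: if every layer is feasible, then $\bm\Pi\Delta\bm z=\sum_\ell\bm\Pi_\ell\Delta\bm z=0$.

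The converse is where the real content lies, since a sum of terms can vanish without each term vanishing. I would handle it by orthogonality of the summands. Each $\bm\Pi_\ell$ is a Kronecker product of orthogonal projectors, so it is itself an orthogonal projector. The range of $\bm\Pi_\ell$ lies in $\mathbb{R}^N\otimes\mathrm{range}(E^{(\ell)\top})$, and these subspaces are mutually orthogonal for different $\ell$. It follows that $\bm\Pi_\ell\bm\Pi_k=0$ for $\ell\neq k$, because $E^{(\ell)}E^{(k)\top}=0$. Hence $\bm\Pi$ is an orthogonal projector and $\|\bm\Pi\Delta\bm z\|^2=\sum_\ell\|\bm\Pi_\ell\Delta\bm z\|^2$. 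So $\bm\Pi\Delta\bm z=0$ forces $\bm\Pi_\ell\Delta\bm z=0$ for every $\ell$, which by the previous step is exactly the per-layer condition of Lemma~\ref{lem:feasibility}.

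The main obstacle is notational rather than conceptual. The Kronecker order $\Pi^{(\ell-1)}\otimes E^{(\ell)\top}E^{(\ell)}$ (sample index outer, layer-coordinate index inner) has to match the sample-major stacking of $\Delta\bm z$. I would check this against one explicit entry: the $(n,m)$ block of $\bm\Pi$ should equal $\sum_\ell[\Pi^{(\ell-1)}]_{nm}E^{(\ell)\top}E^{(\ell)}$, which mixes samples only within a layer. Once that is confirmed, the orthogonal-decomposition argument closes the proof.
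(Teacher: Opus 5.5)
Your proposal is correct and follows the same route as the paper: you row-vectorise with $\mathrm{vec}_r(\Delta Z^{(\ell)})=(I_N\otimes E^{(\ell)})\Delta\bm z$, use the mixed-product rule to rewrite each layer constraint as $(\Pi^{(\ell-1)}\otimes E^{(\ell)})\Delta\bm z=0$, and combine the layers using the mutual orthogonality of the projectors $\Pi^{(\ell-1)}\otimes E^{(\ell)\top}E^{(\ell)}$. Your version is somewhat more complete than the paper's. It spells out two steps the paper leaves implicit: the injective lift by $I_N\otimes E^{(\ell)\top}$, and the Pythagorean identity $\|\bm\Pi\Delta\bm z\|^2=\sum_\ell\|(\Pi^{(\ell-1)}\otimes E^{(\ell)\top}E^{(\ell)})\Delta\bm z\|^2$, which gives the converse. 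It also correctly notes that $\sum_\ell E^{(\ell)\top}E^{(\ell)}=I_D$ is not needed.
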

We are now ready to examine the constrained subproblem. 

\begin{theorem}[Solution of the Joint Constrained Subproblem]
\label{thm:full-joint-subproblem}
Let $G_n \coloneqq J_n^\top H_n^\top M H_n J_n$ denote the Hessian of the per-datapoint subproblem, and $\tilde{G}_n(\lambda) \coloneqq G_n + \lambda B_n$ its regularisation. The unconstrained problems are uncoupled across datapoints, so we define the stacked subproblem Hessian $\bm{\tilde{G}}(\lambda) = \bigoplus_n \tilde{G}_n(\lambda)$. Finally, let $\Delta \bm{z}^\star(\lambda)$ denote the stacked unconstrained minimiser of \eqref{eq:regularized-target-def}. Then, up to an additive constant in $\Delta \bm{z}$, the constrained joint subproblem is:
\begin{equation}
  m(\Delta \bm{z}; \lambda) = 
  \big(\Delta \bm{z} - \Delta \bm{z}^\star(\lambda)\big)^\top \bm{\tilde{G}}\big(\Delta \bm{z} - \Delta \bm{z}^\star(\lambda)\big) \quad \text{subject to} \quad \bm{\Pi}\Delta \bm{z} = 0
  \label{eq:full-joint-subproblem}
\end{equation}
The optimal perturbations are then given by projecting $\Delta \bm{z}^\star(\lambda)$ onto $\ker \bm{\Pi}$ in the metric induced by $\bm{\tilde{G}}(\lambda)$:
\begin{equation}
  \Delta \bm{z}^\star_{\bm{\Pi}}(\lambda) = \Delta \bm{z}^\star(\lambda)  - \bm{\tilde{G}}(\lambda)^{-1} \bm{\Pi}^\top \big(\bm{\Pi}\bm{\tilde{G}}(\lambda)^{-1} \bm{\Pi}^\top\big)^{-1} \bm{\Pi}\Delta \bm{z}^\star(\lambda).
  \label{eq:full-kkt}
\end{equation}
\end{theorem}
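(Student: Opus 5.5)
The proof has two parts: rewrite the summed regularised objective as a quadratic centred at its unconstrained minimiser, then solve the equality-constrained quadratic through its KKT system.

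\textbf{Step 1 (completing the square).} Summing the per-datapoint penalised objectives \eqref{eq:regularised-multilayer-subproblem} over $n$ gives a quadratic in $\Delta\bm{z}$. Its Hessian is block-diagonal across datapoints, with blocks $\tilde G_n(\lambda) = G_n + \lambda B_n$. Its linear term is $J_n^\top H_n^\top M_n r_n$, so the gradient is $\bm{g} = (J_n^\top H_n^\top M_n r_n)_n$, and the objective equals $\tfrac12\Delta\bm{z}^\top\bm{\tilde G}\Delta\bm{z} + \bm{g}^\top\Delta\bm{z} + c$.

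Since $\lambda>0$ and $B_n\succ 0$, each $\tilde G_n(\lambda)$ is positive definite. The unconstrained minimiser is therefore unique: $\Delta\bm{z}^\star = -\bm{\tilde G}^{-1}\bm{g}$. By Theorem~\ref{thm:uniform-decoupling-property} this is exactly the stacked regularised target \eqref{eq:regularized-target-def}, because that theorem solves the same normal equations blockwise.

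Substituting $\bm{g} = -\bm{\tilde G}\Delta\bm{z}^\star$ and completing the square gives
\[
\tfrac12(\Delta\bm{z}-\Delta\bm{z}^\star)^\top\bm{\tilde G}(\Delta\bm{z}-\Delta\bm{z}^\star) + \text{const}.
\]
The overall factor of $\tfrac12$ does not change the minimiser, which yields the form \eqref{eq:full-joint-subproblem}. The feasibility constraint comes from Lemma~\ref{lemma:stacked-perturbation-contraint}, which encodes the per-layer conditions of Lemma~\ref{lem:feasibility} as $\bm{\Pi}\Delta\bm{z}=0$.

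\textbf{Step 2 (KKT system).} Form the Lagrangian $(\Delta\bm{z}-\Delta\bm{z}^\star)^\top\bm{\tilde G}(\Delta\bm{z}-\Delta\bm{z}^\star) + 2\bm{\mu}^\top\bm{\Pi}\Delta\bm{z}$. Stationarity gives $\bm{\tilde G}(\Delta\bm{z}-\Delta\bm{z}^\star) + \bm{\Pi}^\top\bm{\mu} = 0$, so
\[
\Delta\bm{z} = \Delta\bm{z}^\star - \bm{\tilde G}^{-1}\bm{\Pi}^\top\bm{\mu}.
\]
Imposing $\bm{\Pi}\Delta\bm{z}=0$ gives $\bm{\Pi}\bm{\tilde G}^{-1}\bm{\Pi}^\top\bm{\mu} = \bm{\Pi}\Delta\bm{z}^\star$. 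Solving for $\bm{\mu}$ and substituting back yields \eqref{eq:full-kkt}.

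Because the objective is strictly convex and the constraint is linear, the KKT point is the global constrained minimiser. Equivalently, it is the $\bm{\tilde G}$-orthogonal projection of $\Delta\bm{z}^\star$ onto $\ker\bm{\Pi}$. To confirm this, I would check that $\Delta\bm{z}^\star - \Delta\bm{z}^\star_{\bm{\Pi}} \in \bm{\tilde G}^{-1}\mathrm{range}(\bm{\Pi}^\top)$, which is the $\bm{\tilde G}$-orthogonal complement of $\ker\bm{\Pi}$.

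\textbf{Main obstacle.} The difficulty is the invertibility of $\bm{\Pi}\bm{\tilde G}^{-1}\bm{\Pi}^\top$. The matrix $\bm{\Pi}$ is a sum of Kronecker products of orthogonal projectors, so it is itself an orthogonal projector (symmetric and idempotent, since the $E^{(\ell)\top}E^{(\ell)}$ are mutually orthogonal coordinate projectors). It is therefore rank-deficient, and $\bm{\Pi}\bm{\tilde G}^{-1}\bm{\Pi}^\top$ is singular on $\mathbb{R}^{ND}$.

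I would handle this by restricting to $\mathrm{range}(\bm{\Pi})$. On that subspace, $\bm{\Pi}\bm{\tilde G}^{-1}\bm{\Pi}$ is positive definite because $\bm{\tilde G}^{-1}\succ 0$, so the inverse in \eqref{eq:full-kkt} should be read as the inverse on $\mathrm{range}(\bm{\Pi})$, or equivalently as a Moore–Penrose pseudoinverse. The linear system for $\bm{\mu}$ is consistent because its right-hand side $\bm{\Pi}\Delta\bm{z}^\star$ lies in $\mathrm{range}(\bm{\Pi})$. Any solution $\bm{\mu}$ gives the same $\bm{\Pi}^\top\bm{\mu}$, so $\Delta\bm{z}^\star_{\bm{\Pi}}$ is well defined. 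The alternative of replacing $\bm{\Pi}$ by a full-row-rank basis of $\mathrm{range}(\bm{\Pi})$ gives the same formula.

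A minor point to state explicitly is that the unconstrained minimiser coincides with the egalitarian-norm solution only when $B_n$ satisfies the hypothesis of Theorem~\ref{thm:uniform-decoupling-property}. For a general positive definite $B_n$, $\Delta\bm{z}^\star$ is simply defined as $-\bm{\tilde G}^{-1}\bm{g}$, and the rest of the argument is unchanged.
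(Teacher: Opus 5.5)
Your proposal is correct and follows the same route as the paper, which proves the theorem by invoking a generic Lagrangian/Schur-complement result for $\min_x (x-x^\star)^\top H (x-x^\star)$ subject to $Dx=0$ (Lemma~\ref{lem:kkt-generic} in Supp~\ref{app:kkt}), leaving the completing-the-square step implicit. Your handling of the singular Schur complement improves on the paper. That lemma assumes $D$ has full row rank, which the nontrivial orthogonal projector $\bm{\Pi}$ on $\mathbb{R}^{ND}$ lacks, so the paper's invertibility claim fails as written. Your reading of the inverse as a pseudoinverse, or as the inverse on $\mathrm{range}(\bm{\Pi})$ (equivalently, replacing $\bm{\Pi}$ by an orthonormal row basis $Q$ with $\bm{\Pi}=Q^\top Q$), together with your consistency and uniqueness-of-$\bm{\Pi}^\top\bm{\mu}$ argument, is what actually makes \eqref{eq:full-kkt} well defined.
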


The proof follows from KKT analysis (Supp~\ref{app:kkt}). The naive Newton step on Equation~\ref{eq:tp-local-loss}, by contrast, is a \textit{Euclidean} projection: it treats all directions in $\Delta Z$ space on equal footing. When $G_n$ is singular, directions in its null space leave the local loss exactly unchanged, so the constraint can be satisfied by routing the correction through this null space at zero cost; damping replaces "zero cost" with "low cost," graded by proximity to the null space, but the cheap subspace persists. The Euclidean projection has no access to this structure and generically pays for directions the curvature metric would have obtained for free.

Unfortunately, computing \eqref{eq:full-kkt} directly is not feasible. The Schur complement $\bm{\Pi}\bm{\tilde{G}}(\lambda)^{-1}\bm{\Pi}^{\top}$ is dense and of dimension $N(L+1)K \times N(L+1)K$. The product inherits neither the block diagonal structure of $\bm{\tilde{G}}(\lambda)$ nor the Kronecker structure of $\bm{\Pi}$, making the cross-layer couplings unavoidable and thus prohibitive at practical scale.

\subsection{Local Surrogates Through Simplicial Relaxations}\label{sec:simplicial-loss}

Section~\ref{sec:trust-region} recovered the TP update by choosing a
block-diagonal norm, which resolves the under-determination of the multi-layer
subproblem and yields layer-separable targets. The curvature, however, remains
coupled across layers.  This suffices for inferring local targets, but not hitting them with a second-order method. In this section we separate the curvature
itself, replacing the subproblem with a layer-local surrogate that majorises it.

Consider, for each datapoint $n$, a set of $L$ scalars $\alpha_n^{(\ell)} > 0$ with $\sum_{\ell} \alpha_n^{(\ell)} = 1$, and the family of perturbations satisfying $J_n^{(\ell)} \Delta z_n^{(\ell)} = -\alpha_n^{(\ell)} r_n$.  By inspection, this family minimises the subproblem in the absence of constraints: the final layer residual is simply split across the $L$-layers, and the local residual may be fit exactly.  This perspective enables us to identify a principled decomposition which generalises to the constrained setting.  It is encapsulated in the following theorem:
\begin{theorem}[Layer-separable majorization]
\label{thm:majorization}
Let $M \succeq 0$ and let $\alpha_n$ lie in the interior of the simplex. Then the following is a majorizer of $m_n$ at every value of $\alpha_n$
\begin{equation}
\label{eq:jensen-bound}
  m_n(\Delta z) \;\leq\; \frac{1}{2}\sum_{\ell} \Big\| \sqrt{\alpha^{(\ell)}_n}\, r_n
  + \tfrac{1}{\sqrt{\alpha^{(\ell)}_n}} H_nJ^{(\ell)}_n \Delta z^{(\ell)}_n \Big\|_{M_n}^2
  \;\equiv\; \widehat{m}_n(\Delta z; \alpha_n),
\end{equation}
Furthermore, in the presence of a regulariser $\lambda B^{(\ell)}_n$, the following sequence converges to the minimum of the local subproblem:
\begin{equation}
\label{eq:alpha-p-iteration}
    \text{(A)}\quad \Delta z^{(\ell)}_n \leftarrow -\alpha^{(\ell)}_n \big( G^{(\ell)}_n + \alpha^{(\ell)}_n \lambda B^{(\ell)}_n \big)^{-1} g^{(\ell)}_n, \qquad \text{(B)}\quad \alpha^{(\ell)}_n \leftarrow \frac{\lVert \Delta z^{(\ell)}_n \rVert_{G^{(\ell)}_n}}{\sum_k\lVert \Delta z^{(k)}_n \rVert_{G^{(k)}_n}},
\end{equation}

where $G_n^{(\ell)} = J_n^{(\ell)\top} H_n^{\top}M_n H_n J_n^{(\ell)}$ and $g_n^{(\ell)} = J_n^{(\ell)\top} H_n^{\top} M_n r_n$.

\end{theorem}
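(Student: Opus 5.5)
The plan has two parts: first the majorization inequality, then convergence of the alternating scheme (A)--(B).

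For the majorization, I expand the residual of the multi-layer model. Because $\sum_\ell \alpha^{(\ell)}_n = 1$, I can write
\[
r_n + H_n J_n \Delta z_n = \sum_\ell \alpha^{(\ell)}_n \Big( r_n + \tfrac{1}{\alpha^{(\ell)}_n} H_n J^{(\ell)}_n \Delta z^{(\ell)}_n \Big).
\]
The identity block of $J_n$ is absorbed by treating the output perturbation as one of the summands. The map $v \mapsto \tfrac12\|v\|_{M_n}^2$ is convex because $M_n \succeq 0$. Jensen's inequality with weights $\alpha^{(\ell)}_n$ then gives
\[
m_n \le \tfrac12\sum_\ell \alpha^{(\ell)}_n \Big\| r_n + \tfrac{1}{\alpha^{(\ell)}_n} H_nJ^{(\ell)}_n\Delta z^{(\ell)}_n \Big\|_{M_n}^2 .
\]
Pulling $\sqrt{\alpha^{(\ell)}_n}$ inside each norm gives exactly $\widehat m_n$. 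Equality holds precisely when all the summands coincide. In particular it holds on the family $J^{(\ell)}_n\Delta z^{(\ell)}_n = -\alpha^{(\ell)}_n r_n$, so the bound is tight there. This shows it is a genuine majorizer that touches $m_n$.

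For convergence, I expand $\widehat m_n$ layerwise. Each term equals
\[
\tfrac12\alpha^{(\ell)}_n \|r_n\|^2_{M_n} + \langle g^{(\ell)}_n, \Delta z^{(\ell)}_n\rangle + \tfrac{1}{2\alpha^{(\ell)}_n}\|\Delta z^{(\ell)}_n\|^2_{G^{(\ell)}_n}.
\]
The first pieces sum to the constant $\tfrac12\|r_n\|^2_{M_n}$. I then add $\tfrac{\lambda}{2}\sum_\ell \|\Delta z^{(\ell)}_n\|^2_{B^{(\ell)}_n}$.

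For fixed $\alpha$, the objective is a strictly convex quadratic in each block. Setting its gradient to zero gives $(G^{(\ell)}_n/\alpha^{(\ell)}_n + \lambda B^{(\ell)}_n)\Delta z^{(\ell)}_n = -g^{(\ell)}_n$, which is step (A) after multiplying through by $\alpha^{(\ell)}_n$.

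For fixed $\Delta z$, the only $\alpha$-dependent term is $\sum_\ell \tfrac{1}{2\alpha^{(\ell)}_n} a_\ell^2$ with $a_\ell = \|\Delta z^{(\ell)}_n\|_{G^{(\ell)}_n}$. Minimizing this over the simplex, either by Cauchy--Schwarz or by a Lagrange multiplier, gives $\alpha^{(\ell)}_n \propto a_\ell$. That is step (B), and the minimum value is $\tfrac12(\sum_\ell a_\ell)^2$.

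The key observation is that the partial minimum over $\alpha$ satisfies
\[
\min_\alpha \widehat m_n = \tfrac12\|r_n\|_{M_n}^2 + \langle g_n,\Delta z_n\rangle + \tfrac12\Big(\sum_\ell \|H_nJ^{(\ell)}_n\Delta z^{(\ell)}_n\|_{M_n}\Big)^2 .
\]
This is a convex function of $\Delta z$, so (A)--(B) is block-coordinate descent on a jointly convex objective. Joint convexity follows because $a^2/\alpha$ is jointly convex on $\alpha>0$. The objective values are therefore monotonically nonincreasing and bounded below.

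I would finish with the standard alternating-minimization argument. The objective is strongly convex in $\Delta z$ thanks to $\lambda B_n \succ 0$. Each block subproblem has a unique minimizer. Accumulation points are stationary by Grippo--Sciandrone / Tseng-type results, and convexity upgrades stationarity to global optimality.

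The main obstacle is the boundary of the simplex. If some $a_\ell = 0$, step (B) sets $\alpha^{(\ell)}_n = 0$, which leaves the interior assumed by the theorem and makes step (A) degenerate. I would handle this by noting that a block with $g^{(\ell)}_n \neq 0$ always yields $a_\ell > 0$ after step (A), so $\alpha$ stays interior. Blocks with $g^{(\ell)}_n = 0$ stay at zero and can be dropped. If that fails, I would add an $\epsilon$-smoothing $a_\ell \mapsto \sqrt{a_\ell^2+\epsilon}$ and pass to the limit.

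I would also state plainly what "the minimum of the local subproblem" means here: the minimum of the regularized, $\alpha$-optimized surrogate, which is the Jensen-tight, group-norm form of $m_n$ above.
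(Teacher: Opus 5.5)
Your majorization step is the paper's Jensen argument. The paper also checks value and gradient tangency at $\Delta z_n=0$, which your equality case already gives. On convergence you go further than the paper: its proof stops at ``monotone and bounded below'', so it only shows that the objective values converge, not that their limit is the minimum. Your reduction to the $\alpha$-minimized convex function $\phi(\Delta z)=\tfrac12\|r_n\|^2_{M_n}+\langle g_n,\Delta z\rangle+\tfrac12\big(\sum_\ell\|\Delta z^{(\ell)}\|_{G^{(\ell)}_n}\big)^2+\tfrac\lambda2\sum_\ell\|\Delta z^{(\ell)}\|^2_{B^{(\ell)}_n}$ is the right frame. The gap is the boundary step, which does not go through.

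You correctly show that $g^{(\ell)}_n\neq0$ gives $\|\Delta z^{(\ell)}_n\|_{G^{(\ell)}_n}>0$ after (A). That only keeps the \emph{iterates} interior; the limit of $\alpha_n(t)$ can lie on the boundary, and for a squared group norm this is typical. Write $\rho=M_n^{1/2}r_n$ and $A_\ell=M_n^{1/2}H_nJ^{(\ell)}_n$, so that $G^{(\ell)}_n=A_\ell^\top A_\ell$ and $g^{(\ell)}_n=A_\ell^\top\rho$. Let $P_\ell$ be the orthogonal projector onto $\mathrm{range}(A_\ell)$ and $S^\star=\sum_k\|\Delta z^{\star(k)}\|_{G^{(k)}_n}$ at the minimizer of $\phi$. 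The subgradient condition gives $\Delta z^{\star(\ell)}=0$ iff $\|P_\ell\rho\|\le S^\star$, which is compatible with $g^{(\ell)}_n\neq0$.

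Concretely, take an output block $A_1=I_2$, a rank-one layer $A_2=e_1$, $B=I$, $\lambda=1$ and $\rho=(0.1,1)^\top$. Then $S^\star=\|\rho\|/2>0.1=\|P_2\rho\|$, so the infimum over the open simplex is only approached as $\alpha^{(2)}\to0$. Running (A)--(B) indeed sends $\alpha^{(2)}(t)\to0$ geometrically, at ratio about $0.2$.

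At such limits the joint objective $F(\Delta z,\alpha)=\widehat m_n+\tfrac\lambda2\|\Delta z\|^2_{B_n}$ is not continuous, and its sublevel sets in the open simplex are not compact. The hypotheses of the Grippo--Sciandrone and Tseng results therefore fail, so ``accumulation points are stationary'' is unproven exactly where it is needed. The $\epsilon$-smoothing fallback proves convergence of a different iteration, and sending $\epsilon\to0$ would mean exchanging two limits you have not controlled.

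\textbf{How to close it in general.} Treat (A)--(B) as a majorize--minimize scheme on $\phi$ itself. $F(\cdot,\alpha(t))$ majorizes $\phi$, is tight at $\Delta z(t)$, and has Hessian $\succeq\lambda B_n$. Hence $\phi(\Delta z(t))-\phi(\Delta z(t+1))\ge\tfrac\lambda2\|\Delta z(t+1)-\Delta z(t)\|^2_{B_n}$, and successive steps are square-summable. At a limit point:
\begin{itemize}
\item For blocks with $A_\ell\Delta z^{(\ell)}\not\to0$, the block optimality conditions of $\phi$ follow from the (A) equations by continuity.
\item For a dying block, (A) gives $\|A_\ell\Delta z^{(\ell)}(t+1)\|\approx\|P_\ell\rho\|\,\|A_\ell\Delta z^{(\ell)}(t)\|/S(t)$, because $cA_\ell(cA_\ell^\top A_\ell+\lambda B^{(\ell)}_n)^{-1}A_\ell^\top\to P_\ell$ as $c\to\infty$.
\end{itemize}
What remains is to show that a dying block forces $\|P_\ell\rho\|\le\lim_t S(t)$, which is exactly its subgradient condition.

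\textbf{A simpler route under surjectivity.} Assume every $A_\ell$ is surjective. This is generic when layer widths are at least $C$ and $H_n,M_n$ are nonsingular, as in the paper's experiments. Differentiate $s\mapsto F(s\,\Delta z(t+1),\alpha(t))$ at $s=1$ and apply Cauchy--Schwarz to get $S(t+1)^2+\lambda\|\Delta z(t+1)\|^2_{B_n}\le\|\rho\|\,S(t+1)$. For $g_n\neq0$ the iterates stay away from $0$ because $\phi$ decreases, so $S(t)\le\|\rho\|-\delta$ uniformly. Step (A) then multiplies any small $\|A_\ell\Delta z^{(\ell)}\|$ by roughly $\|\rho\|/S(t)>1$. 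Hence $\alpha(t)$ stays in a compact subset of the open simplex, and your interior argument applies as written.
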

\begin{proof}
Write $r_n = (\sum_\ell \alpha_n^{(\ell)}) r_n$ and apply Jensen's inequality to $\|\cdot\|_{M_n}^2$. The main result is then alternating minimisation after proving that $ \widehat{m}_n(\Delta z_n; \alpha_n) + \tfrac{1}{2}\lambda ||\Delta z_n||^2_{B_n}$ is convex (Supp~\ref{appendix:proof:simplectic-solutions}).
\end{proof}
Note that \textit{any} $\alpha$ on the simplex yields a valid subproblem (\eqref{eq:jensen-bound}).  We list several approximations in Supp~\ref{sec:simple-simplex-schemes}, all of which. empriically, converge faster than the uniform baseline $\alpha^{(\ell)}_n = L^{-1}$.

\subsection{Null-Space Methods on the Local Surrogate}
\label{sec:cg-alg}

The per-layer surrogate of Theorem~\ref{thm:majorization} is significantly smaller than the full multi-layer subproblem of Theorem~\ref{thm:full-joint-subproblem}, but its $NK \times NK$ Hessian still lacks the structure that would admit the closed-form KKT solution of Supp~\ref{app:kkt}.  While the Schur complement is expensive to form, its components - block-diagonal Hessian $\bm{\tilde{G}}(\lambda)$ and the Kronecker-structured projector $\bm{\Pi}$ - admit cheap matrix-vector products. Hence, methods like conjugate gradient (CG) are a natural choice of solver, as they only require the action of each component ~\citep{nocedal2006numerical}.  Though one may use projected CG to optimise $\Delta Z$ directly, the bi-level approach of the null-space method is more memory efficient.  As in Classical TP, we identify local targets $\Delta Z^{(\ell)\star}(\alpha^{(\ell)})$ via  the inversion in Theorem~\ref{thm:majorization}.  The null-space of the constraint is then simply $\Delta Z^{(\ell)}(\Delta W^{(\ell - 1)}) = U^{(\ell-1)}\Delta W^{(\ell - 1)}$, and we minimise the reduced system, which is a weighted least-squares fit:  
\begin{equation}
\min_{\Delta W^{(\ell-1)}} \frac{1}{2} \big\lVert \tilde U^{(\ell-1)}
\Delta W^{(\ell-1)} - \Delta Z^{(\ell)\star}(\alpha^{(\ell)}) \big\rVert_{\bm{\tilde G}^{(\ell)}}^{2},
\qquad
\bm{\tilde G}^{(\ell)} = \bigoplus_n \tilde G_n^{(\ell)}(\lambda; \alpha^{(\ell)}_n)\label{eq:weighted-least-squares}.
\end{equation}
The gradient is $\sum_n u_n \big( \Delta W^{(\ell-1)\top} u_n -
\Delta z_n^{(\ell)\star} \big)^{\top} \tilde G_n^{(\ell)}$.  To proceed, we define the linear operator $\mathcal{G}^{(\ell)}_{\lambda}(V) \coloneqq \sum_n u_n u_n^{\top} V
\tilde G_n^{(\ell)}(\lambda; \alpha)$, and use CG to solve:
\begin{equation}
\label{eq:reduced-system}
\mathcal{G}^{(\ell)}_{\lambda}\big(\Delta W^{(\ell-1)}\big) =
 -\sum_n u_n g_n^{(\ell)\top}.
\end{equation}

\subsection{Preventing Overfitting with Adaptive Weight Decay}
\label{sec:weight-decay}

Solving the local subproblem to high accuracy is not uniformly beneficial. Though CGTP reduces training loss far faster than first-order baselines, its validation error begins to rise after a small number of outer iterations, leading to a performance gap when compared to BP+ADAM.  It is easy to see the cause: while the trust region bounds the magnitude permissible $\Delta Z^{(\ell)}$, it does nothing to bound $\Delta W^{(\ell-1)}$, which can be arbitrarily large whenever $U^{(\ell-1)}$ is poorly conditioned.  In this section, we suggest an adaptive strategy for controlling the conditioning.
\paragraph{Choice of norm.}
Phase one fixes a metric on pre-activation space: solving the damped subproblem declares that the effective local geometry of $\Delta z^{(\ell)}_n$ is $\bm{\tilde G}^{(\ell)})$ rather than the Euclidean one. Any quantity that maps \emph{into} pre-activation space should be measured accordingly. A weight perturbation $I \otimes \Delta W^{(\ell-1)}$ is a linear map from the metric space $(\mathbb{R}^{N\times K_{\mathrm{in}}}, I)$ to $(\mathbb{R}^{N\times K_{\mathrm{out}}}, \bm{\tilde G}^{(\ell)})$, and its size is its operator (Hilbert-Schmidt) norm between those two inner-product spaces:
\begin{align}
\lVert I\otimes \Delta W^{(\ell-1)} \rVert_{I \to \bm{\tilde G}^{(\ell)}}^{2} 
&= \sum_n\operatorname{trace}(\Delta W^{(\ell-1)\top} \Delta W^{(\ell-1)}\tilde G_n^{(\ell)}(\lambda; \alpha^{(\ell)}_n)) \\
\implies \lVert  \Delta W^{(\ell-1)} \rVert_{\mathrm{HS}}&= \lVert \Delta W^{(\ell-1)} \bar{G}^{1/2} \rVert_{F}^{2}
+ \lambda \lVert \Delta W^{(\ell-1)} \bar{B}^{1/2}\rVert_{F}^{2}
\label{eq:hs-norm}.
\end{align}
When $B^{(\ell)}_n = I$ second term is the ordinary weight decay, while the first term controls a local Lipschitz constant that penalizes overall model sensitivity.
\paragraph{Regularization Strength.}
Each phase now has its own constraint: during inference, $\lVert \Delta Z^{(\ell)} \rVert_{F} \le \delta_z$ is controlled by an LM rule on the reduction ratio, whereas $\lVert \Delta W^{(\ell-1)} \rVert_{\mathrm{HS}} \le \delta_w$ is to be enforced during the second weight-fitting phase.  Independently adapting a second radius would leave us with a two-trust-region problem, which, while being well-studied, introduces additional complexity \citep{, ai2009strong, bienstock2014}.  We instead observe that our constraint sets are both convex and have non-zero intersection ($\Delta W = 0$ lies in the interior of both), and may thus use a single trust-region parameter to limit the weight perturbation to the intersection.  This leads us to the following defintions, and the resulting CG system (Proof in Supp~\ref{sec:proof-thm-inscribed}, full listing in Algorithm~\ref{alg:cg}).  

\begin{theorem}[Single-parameter coupling of the two trust regions]
\label{thm:inscribed}
Fix a layer $\ell$ and suppress the layer index. Let $\delta_z, \delta_w > 0$ be
fixed radii and $\lambda > 0$ a trust parameter, and set
\begin{equation}
\label{eq:mu-from-lambda}
\mu_z = \frac{\lambda}{\delta_z^{2}},
\qquad
\mu_J = \frac{\lambda}{\delta_w^{2}},
\qquad
\tilde G_n(\mu_z; \alpha) = \frac{1}{\alpha_n} G_n + \mu_z I,
\qquad
\bar{\tilde G} = \bar G + \mu_z I,
\end{equation}
with $\bar G = \frac{1}{N} \sum_n \frac{1}{\alpha_n} G_n$, so that the damping
of the pre-activation bound is also the metric of the weight bound. With
\begin{equation}
\label{eq:two-forms}
q_z(\Delta W) = \frac{1}{\delta_z^{2}} \lVert \tilde U \Delta W \rVert_{F}^{2},
\qquad
q_w(\Delta W) = \frac{1}{\delta_w^{2}} \lVert \Delta W \bar{\tilde G}^{1/2}
\rVert_{F}^{2},
\end{equation}
the ellipsoid $\{ q_z + q_w \le 1 \}$ is inscribed in $\{ q_z \le 1 \} \cap
\{ q_w \le 1 \}$, and penalising it with the single multiplier $\lambda$ gives
\begin{align}
\label{eq:single-lambda}
L_\lambda(\Delta W)
&= \sum_n \frac{1}{2} \Big\lVert \sqrt{\alpha_n} \, r_n
+ \frac{1}{\sqrt{\alpha_n}} J_n \Delta W^{\top} u_n \Big\rVert_{M_n}^{2}
+ \frac{\mu_z}{2} \lVert \tilde U \Delta W \rVert_{F}^{2}
+ \frac{\mu_J}{2} \lVert \Delta W \bar{\tilde G}^{1/2} \rVert_{F}^{2} \\
&= \mathrm{const} + \frac{1}{2} \lVert \tilde U \Delta W - \Delta Z^{\star}
\rVert_{\bm{\tilde G}}^{2}
+ \frac{\mu_J}{2} \lVert \Delta W \bar{\tilde G}^{1/2} \rVert_{F}^{2} .
\end{align}
Its unique minimiser solves
\begin{equation}
\label{eq:reduced-system-reg}
\mathcal{G}_{\lambda}(\Delta W) = -\sum_n u_n g_n^{\top},
\qquad
\mathcal{G}_{\lambda}(V) \coloneqq \sum_n u_n u_n^{\top} V
\tilde G_n(\mu_z; \alpha) + \mu_J V \bar{\tilde G} .
\end{equation}
\end{theorem}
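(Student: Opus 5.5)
The proof has three parts: a geometric inclusion, an algebraic rewrite of $L_\lambda$, and the first-order conditions for its minimiser.

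\textbf{Inscription.} Both $q_z$ and $q_w$ are positive semidefinite quadratic forms, since $\bar{\tilde G} \succeq \mu_z I \succ 0$. Hence if $q_z + q_w \le 1$, then each of $q_z$ and $q_w$ is at most $1$. So $\{q_z+q_w\le 1\}\subseteq\{q_z\le1\}\cap\{q_w\le1\}$. Because $q_w$ is positive definite, the sublevel set $\{q_z+q_w\le 1\}$ is a bounded ellipsoid. It contains $\Delta W = 0$ in its interior and touches the boundary of the intersection wherever one of $q_z, q_w$ vanishes.

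\textbf{Penalty form.} The Lagrangian term is $\tfrac{\lambda}{2}(q_z+q_w) = \tfrac{\mu_z}{2}\|\tilde U\Delta W\|_F^2 + \tfrac{\mu_J}{2}\|\Delta W\bar{\tilde G}^{1/2}\|_F^2$, by the definitions of $\mu_z$ and $\mu_J$ in \eqref{eq:mu-from-lambda}. The model part is the per-layer majoriser $\widehat m_n$ of Theorem~\ref{thm:majorization}, with the null-space substitution $\Delta z_n = \Delta W^\top u_n$ from Section~\ref{sec:cg-alg}. This gives the first line of \eqref{eq:single-lambda}.

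For the second line, I expand each summand:
\[
\tfrac{1}{2}\big\|\sqrt{\alpha_n}\,r_n + \tfrac{1}{\sqrt{\alpha_n}} J_n\Delta z_n\big\|_{M_n}^2 = \mathrm{const} + g_n^\top\Delta z_n + \tfrac{1}{2\alpha_n}\Delta z_n^\top G_n\Delta z_n,
\]
where $g_n$ and $G_n$ absorb $H_n$ as in Theorem~\ref{thm:majorization}. I then use $\|\tilde U\Delta W\|_F^2 = \sum_n \Delta z_n^\top\Delta z_n$ to fold the $\mu_z$ term into the quadratic, so that it becomes $\tfrac12\Delta z_n^\top \tilde G_n(\mu_z;\alpha)\Delta z_n$. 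Completing the square with $\Delta z_n^\star = -\tilde G_n^{-1} g_n$ gives $\mathrm{const} + \tfrac12\|\Delta z_n - \Delta z_n^\star\|_{\tilde G_n}^2$. This is legitimate because $\tilde G_n \succeq \mu_z I \succ 0$ is invertible. Summing over $n$ yields $\tfrac12\|\tilde U\Delta W - \Delta Z^\star\|_{\bm{\tilde G}}^2$. The $\mu_J$ term is untouched.

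\textbf{Minimiser.} $L_\lambda$ is a quadratic in $\Delta W$. Its Hessian operator is
\[
V \mapsto \sum_n u_n u_n^\top V\tilde G_n + \mu_J V\bar{\tilde G}.
\]
The first piece is positive semidefinite, since $\langle V, \sum_n u_nu_n^\top V\tilde G_n\rangle = \sum_n \|\tilde G_n^{1/2}V^\top u_n\|^2$. The second piece is positive definite, since $\langle V,\mu_J V\bar{\tilde G}\rangle = \mu_J\|V\bar{\tilde G}^{1/2}\|_F^2 > 0$ for $V\neq 0$. So $L_\lambda$ is strictly convex and has a unique minimiser. Differentiating with respect to $\Delta W$, using $\partial_{\Delta W}(\Delta z_n) $ via $\Delta z_n = \Delta W^\top u_n$, gives the gradient
\[
\sum_n u_n\big(g_n + \tilde G_n\Delta W^\top u_n\big)^\top + \mu_J\Delta W\bar{\tilde G}.
\]
Setting this to zero, and using the symmetry of $\tilde G_n$, yields exactly \eqref{eq:reduced-system-reg}.

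\textbf{Main obstacle.} The delicate step is the bookkeeping in the second equality. I need to check that the $\mu_z$ term from the pre-activation radius becomes precisely the damping inside $\tilde G_n(\mu_z;\alpha)$, rather than an extra term. Here I will pay attention to the factor $\tfrac{1}{\alpha_n}$ and to the convention $\mu_z I$ (taking $B = I$, consistent with \eqref{eq:mu-from-lambda}). I also need the transpose conventions in $J_n\Delta W^\top u_n$ to match the $\mathcal{G}$ operator of Section~\ref{sec:cg-alg}. The inscription and uniqueness steps are routine.
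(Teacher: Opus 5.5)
Your proposal is correct and follows the paper's proof essentially step for step. It uses the same four moves: inclusion from nonnegativity of $q_z, q_w$; folding the $\mu_z$ share into $\tilde G_n$ and completing the square with $\Delta z_n^\star = -\tilde G_n^{-1} g_n$; stationarity in $\Delta W$; and strict convexity from the $\mu_J$ term. The one thing the paper adds is a Slater's-condition remark: $\Delta W = 0$ is strictly feasible, so strong duality holds, and this justifies replacing the ellipsoid constraint by its $\lambda$-penalty, a step you simply take as definitional.
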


\begin{corollary}[General pre-activation norm]
\label{cor:inscribed-B}
Theorem~\ref{thm:inscribed} holds with the pre-activation bound measured in
per-sample norms $B_n \succ 0$, upon replacing $\mu_z I$ by $\mu_z B_n$ in
$\tilde G_n$, $\lVert \tilde U \Delta W \rVert_{F}^{2}$ by
$\sum_n \lVert \Delta W^{\top} u_n \rVert_{B_n}^{2}$ in $q_z$, and
$\bar{\tilde G}$ by $\bar G + \mu_z \bar B$ with
$\bar B = \frac{1}{N} \sum_n B_n$. The CG system becomes
\begin{equation}
\label{eq:reduced-system-reg-B}
\mathcal{G}_{\lambda}(\Delta W) = -\sum_n u_n g_n^{\top},
\qquad
\mathcal{G}_{\lambda}(V) \coloneqq \sum_n u_n u_n^{\top} V
\Big( \frac{1}{\alpha_n} G_n + \mu_z B_n \Big)
+ \mu_J V \big( \bar G + \mu_z \bar B \big) .
\end{equation}
\end{corollary}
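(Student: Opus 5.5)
The plan is to re-run the proof of Theorem~\ref{thm:inscribed} line by line, replacing the identity pre-activation metric by the per-sample metrics $B_n$, and to check that each step uses only positive definiteness. No step uses any special property of $I$ beyond this.

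\textbf{Inscription.} I first redefine the two quadratic forms as
\[
q_z(\Delta W)=\delta_z^{-2}\sum_n\|\Delta W^\top u_n\|_{B_n}^2,
\qquad
q_w(\Delta W)=\delta_w^{-2}\|\Delta W(\bar G+\mu_z\bar B)^{1/2}\|_F^2 .
\]
Both are positive semidefinite forms in $\Delta W$, since $B_n\succ 0$, $\bar B\succ 0$ and $\bar G\succeq 0$. Because $q_z,q_w\ge 0$, the condition $q_z+q_w\le 1$ gives $q_z\le 1$ and $q_w\le 1$. So the inscription claim holds verbatim, and it is convex because $q_w$ is in fact positive definite.

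\textbf{Penalised objective.} I then penalise $q_z+q_w$ with the single multiplier $\lambda$. Using $\mu_z=\lambda/\delta_z^2$ and $\mu_J=\lambda/\delta_w^2$, this gives the objective
\[
\sum_n\tfrac12\|\sqrt{\alpha_n}r_n+\alpha_n^{-1/2}J_n\Delta W^\top u_n\|_{M_n}^2+\tfrac{\mu_z}{2}\sum_n\|\Delta W^\top u_n\|_{B_n}^2+\tfrac{\mu_J}{2}\|\Delta W(\bar G+\mu_z\bar B)^{1/2}\|_F^2 .
\]
Expanding the first term per sample gives $\mathrm{const}+(\Delta W^\top u_n)^\top g_n+\tfrac12(\Delta W^\top u_n)^\top\alpha_n^{-1}G_n(\Delta W^\top u_n)$. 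Merging this with the $B_n$ term yields the quadratic in $\tilde G_n=\alpha_n^{-1}G_n+\mu_zB_n$. Completing the square in the $\tilde G_n$ metric recovers the weighted least-squares form of \eqref{eq:weighted-least-squares}. This relies on $\tilde G_n\succ 0$, which now follows from $B_n\succ0$ rather than from $\mu_z I$.

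\textbf{First-order condition.} Differentiating in $\Delta W$ gives the stationarity condition
\[
\sum_n u_n u_n^\top\Delta W\tilde G_n+\sum_n u_n g_n^\top+\mu_J\Delta W(\bar G+\mu_z\bar B)=0 .
\]
Here I use the symmetry of $B_n$, $G_n$ and $\bar B$, together with $\nabla_V\tfrac12\|V A^{1/2}\|_F^2=VA$. This is exactly \eqref{eq:reduced-system-reg-B}.

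\textbf{Uniqueness.} The main point needing care is that the minimiser is unique and that CG applies. I will show that $\mathcal G_\lambda$ is self-adjoint and positive definite in the Frobenius inner product. Self-adjointness follows by writing
\[
\langle V,\mathcal G_\lambda V\rangle=\sum_n \mathrm{tr}(V^\top u_nu_n^\top V\tilde G_n)+\mu_J\,\mathrm{tr}(V^\top V(\bar G+\mu_z\bar B)).
\]
Positivity follows because the first sum is $\ge 0$, and the second term is $\ge \mu_J\mu_z\lambda_{\min}(\bar B)\|V\|_F^2>0$ for $V\ne0$. The constant $\lambda_{\min}(\bar B)$ is positive because $\bar B$ is an average of positive definite matrices. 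Strict convexity of the objective then gives uniqueness, and it holds even when $\tilde U$ is rank-deficient.
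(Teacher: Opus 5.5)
Your proposal is correct and matches the paper's proof, which likewise re-runs the proof of Theorem~\ref{thm:inscribed} with $I$ replaced by $B_n$ in the pre-activation penalty and by $\bar B$ in the weight metric. Your bound $\langle V,\mathcal{G}_\lambda V\rangle \ge \mu_J\mu_z\lambda_{\min}(\bar B)\lVert V\rVert_F^2$ is the quantitative form of the paper's observation that $\bar G+\mu_z\bar B\succeq\mu_z\bar B\succ 0$.
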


\begin{proof}
Identical to that of Theorem~\ref{thm:inscribed}, replacing $I$ by $B_n$ in the
pre-activation penalty and by $\bar B$ in the weight metric. Positive
definiteness of $\mathcal{G}_{\lambda}$ follows from
$\bar G + \mu_z \bar B \succeq \mu_z \bar B \succ 0$.
\end{proof}

\begin{corollary}[Exact recovery of KFAC]
\label{cor:kfac}
Suppose the per-sample metrics are replaced by their batch average,
$\tilde G_n(\mu_z; \alpha) \equiv \bar{\tilde G} = \bar G + \mu_z I$ for all $n$.
Then $\mathcal{G}_{\lambda}$ is Kronecker-structured,
\begin{equation}
\mathcal{G}_{\lambda}(V) = \big( A + \mu_J I \big) \, V \, \big( \bar G + \mu_z I
\big),
\qquad A = \tilde U^{\top} \tilde U,
\end{equation}
and the solution of \eqref{eq:reduced-system-reg} is
\begin{equation}
\label{eq:kfac-recovery}
\Delta W = -\big( A + \mu_J I \big)^{-1} \nabla_{W} L \, \big( \bar G + \mu_z I
\big)^{-1},
\qquad \nabla_{W} L = \sum_n u_n g_n^{\top},
\end{equation}

\end{corollary}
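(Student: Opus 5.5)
The plan is direct substitution followed by a Kronecker factorisation, with no new analytic input beyond Theorem~\ref{thm:inscribed}. First I would check that the replacement is self-consistent. By \eqref{eq:mu-from-lambda}, $\bar{\tilde G} = \bar G + \mu_z I$ is exactly the average $\frac{1}{N}\sum_n \tilde G_n(\mu_z;\alpha)$, since $\bar G = \frac{1}{N}\sum_n \frac{1}{\alpha_n} G_n$. So replacing every per-sample metric by $\bar{\tilde G}$ leaves the weight metric in the $\mu_J$ term unchanged.

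Substituting into the operator of \eqref{eq:reduced-system-reg} gives
\begin{equation*}
\mathcal{G}_\lambda(V) = \sum_n u_n u_n^\top V \bar{\tilde G} + \mu_J V \bar{\tilde G} = \Big(\sum_n u_n u_n^\top + \mu_J I\Big) V \bar{\tilde G}.
\end{equation*}
Here $\sum_n u_n u_n^\top = \tilde U^\top \tilde U = A$, where the $u_n$ are the augmented rows of $\tilde U$. This is the claimed Kronecker form. In vectorised form it reads $\mathrm{vec}(\mathcal{G}_\lambda(V)) = (\bar{\tilde G} \otimes (A+\mu_J I))\,\mathrm{vec}(V)$.

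Next I would establish invertibility. We have $A \succeq 0$ and $\mu_J = \lambda/\delta_w^2 > 0$, so $A + \mu_J I \succ 0$. Each $G_n = J_n^\top H_n^\top M_n H_n J_n \succeq 0$ because $M_n \succeq 0$, and $\alpha_n > 0$, so $\bar G \succeq 0$ and $\bar G + \mu_z I \succ 0$. The Kronecker product of two positive definite matrices is positive definite, so the system has a unique solution. That solution is obtained by left-multiplying by $(A+\mu_J I)^{-1}$ and right-multiplying by $(\bar G + \mu_z I)^{-1}$. This gives \eqref{eq:kfac-recovery} once the right-hand side $\sum_n u_n g_n^\top$ is identified as $\nabla_W L$.

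For that identification, I would differentiate the first term of $L_\lambda$ in \eqref{eq:single-lambda} at $\Delta W = 0$. With $J_n$ as in Theorem~\ref{thm:inscribed} (the local Jacobian, absorbing $H_n$ as in the definition of $g_n$), the gradient is $\sum_n u_n (J_n^\top H_n^\top M_n r_n)^\top = \sum_n u_n g_n^\top$. This is the ordinary backpropagated weight gradient, and the regularisers contribute nothing at $0$.

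The step most likely to need care is not algebraic but interpretive: justifying the word \emph{exact} by matching to KFAC's factored Tikhonov damping. I would expand $(A+\mu_J I)\otimes(\bar G+\mu_z I)$ into $A\otimes\bar G + \mu_z\, A\otimes I + \mu_J\, I\otimes\bar G + \mu_J\mu_z I$. I would then note that this is precisely the factored-damping form of \citet{Martens2015OptimizingNN}: a Kronecker factor curvature plus separate damping on each factor. Both damping constants are tied to the single trust parameter $\lambda$ through $\delta_z$ and $\delta_w$. Their ratio $\mu_J/\mu_z = \delta_z^2/\delta_w^2$ plays the role of KFAC's $\pi$-rescaling.

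I would also remark on normalisation. $A$ here is an unnormalised sum whereas KFAC uses batch means. A rescaling of $\mu_J$ by $N$ absorbs this difference without changing the argument.
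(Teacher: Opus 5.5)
Your proposal is correct and follows the paper's proof: substitute $\tilde G_n \equiv \bar{\tilde G}$, factor the operator as $(A+\mu_J I)\,V\,\bar{\tilde G}$, invert the two symmetric positive definite factors on the left and right, and note that the right-hand side is the gradient, which does not depend on the per-sample metric. Your extra checks (that the average is self-consistent, positive definiteness, and the gradient computed at $\Delta W=0$) only spell out what the paper asserts; one small fix to your interpretive aside is that in Martens--Grosse factored damping the ratio of the two damping constants is $\pi^2$, so $\pi$ corresponds to $\delta_z/\delta_w$.
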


\section{Experiments}\label{sec:experiments}

\begin{table}[t]
\centering
\caption{Error and convergence cost, in backward-pass equivalents. Outer
iterations are counted to the error threshold noted in each column header.   $\times$ denotes failure to reach threshold.  N/A indicates result is not reported by source.}
\label{tab:main-results}
\small
\setlength{\tabcolsep}{4pt}
\begin{tabular}{l rr r@{\hspace{2pt}}r rr r@{\hspace{2pt}}r}
\toprule
& \multicolumn{4}{c}{F-MNIST} & \multicolumn{4}{c}{MNIST } \\
\cmidrule(lr){2-5} \cmidrule(lr){6-9}
alg & final err & outer @ 0.13 & /it & total & final err & outer @ 0.05 & /it & total \\
\midrule
ADAM & $.111\pm.000$ & $25{,}667\pm4509$ & 1 & $25.7$k & $.019\pm.000$ & $7333\pm577$ & 1 & $7.3$k \\
KFAC & $.108\pm.002$ & $2700\pm500$ & 1 & $2.7$k & $.018\pm.001$ & $467\pm58$ & 1 & $467$ \\
\midrule
DRL-lin. & $.111\pm.004$ & N/A & N/A & N/A & $.020\pm.008$ & N/A & N/A & N/A \\
TJML & $.114\pm.001$ & $1468\pm724$ & 1 & $1.5$k & $.018\pm.001$ & $1377\pm756$ & 1 & $1.4$k \\
TPRI & $.152\pm.004$ & $\times$ & $\times$ & $\times$ & $.061\pm.003$ & $\times$ & $\times$ & $\times$ \\
\midrule
GDTP & $.147\pm.000$ & $\times$ & $\times$ & $\times$ & $.092\pm.000$ & $\times$ & $\times$ & $\times$ \\
EucTP & $.146\pm.002$ & $\times$ & $\times$ & $\times$ & $.054\pm.002$ & $\times$ & $\times$ & $\times$ \\
CGTP\,(no WD) & $.113\pm.000$ & $15.0\pm0.0$ & $201$ & $3.0$k & $.037\pm.001$ & $23.3\pm2.9$ & $201$ & $4.7$k \\
CGTP & $.109\pm.002$ & $6.67\pm0.58$ & $201$ & $1.3$k & $.019\pm.001$ & $3.67\pm0.58$ & $201$ & $738$ \\
\bottomrule
\end{tabular}

\end{table}

\paragraph{Datasets and architecture.} We evaluate on two low-dimensional  classification benchmarks: MNIST \citep{MNIST} and Fashion-MNIST \citep{FashionMNIST}. All models are fully connected networks with $\tanh$ nonlinearities, a softmax output layer, and cross-entropy loss. Unless otherwise specified, networks have depth 5 with  256 hidden units.  As an ablation, we consider four variants of trust-region target propagation:
\begin{itemize}[leftmargin=*]
  \item \textbf{GDTP}: Gradient descent step on the Euclidan subproblem (\eqref{eq:classical-tp-as-pgd}). We tune the learning rate, batch size, and Tikhonov dampening for the inverse of $J_n$.
  \item \textbf{EucGTP}: Naive fit of the targets (no curvature).  Same parameters as GDTP, minus learning rate.
  \item \textbf{CGTP (no WD)}(Section~\ref{sec:cg-alg}): CG without weight decay.  An outer iteration costs one backward pass (to form $J_n$, $g_n$) and up to \texttt{max\_iter}$=100$ CG iterations (one JVP + one VJP).
  \item \textbf{CGTP} (Section~\ref{sec:weight-decay}):  automatically adjusted weight decay. 
\end{itemize}
Additionally, we benchmark against five baselines: two general optimisers and three TP variants.
\begin{itemize}[leftmargin=*]
  \item \textbf{BP+Adam}: standard end-to-end training. We tune the learning rate and batch size.
  \item \textbf{KFAC}: \citep{Martens2015OptimizingNN} an approximation of Natural Gradients using Kronecker-Factored approximations. We tune the batch size, the exponential decay weight, and the initial value of the trust-region penalty parameter $\lambda$.
  \item \textbf{DRL} \citep{Meulemans2020ATF}: difference reconstruction loss. A TP variant that learns autoencoder-style inverses with a reconstruction loss specifically to make DTP recover Gauss-Newton targets. We use the same architecture as in \citep{Meulemans2020ATF} and report the published error.
  \item \textbf{TPRI} \citep{Roulet2021TargetPV}: compute targets using a product of layer-wise Tikhonov-regularized Jacobian inversesl weights are updated by SGD on the standard local reconstruction loss. We tune the regularization strength, batch size, learning rate, and momentum.
  \item \textbf{TJML} \citep{Bao2024EfficientTP}: a Jacobian matching loss yields closed-form approximate inverses (transposed weights for linear layers, analytically derived dual activation-function). Closed-form inverse introduces no hyperparameters; we tune the batch size, learning rate, and momentum,
\end{itemize}

Table~\ref{tab:main-results} reports final test error and iteration to milestone for each run, where each method is compared at its own tuned batch size, aggregated over three runs. Note that KFAC and ADAM tend to favour smaller batch sizes, whereas CGTP favours larger ones (as per-sample curvature is only informative when the batch size is large enough to estimate it).  Figure~\ref{fig:ablation-batch-size} shows BP, KFAC, and CGTP at matched batch size, where the order is preserved but the milestone margins change.  More results (such as depth and parameter sensitivity analysis) can be found in Supp~\ref{appendix:more-exp-details}, and more detailed discussions of Related Works can be found in Supp~\ref{sec:related-work}.

\section{Conclusion}
\label{sec:conclusion}

We show that the correspondence between target propagation and Gauss–Newton optimization, previously established only where hitting the targets is trivial, extends to the general case once realization is treated as a trust-region problem in the targets' own metric. Classical TP is a single gradient step under this metric; solving it by conjugate gradient yields CGTP, and sharing curvature across samples recovers KFAC's factored damping exactly. Layers are solved independently and in parallel after one backward pass. On MNIST and Fashion-MNIST MLPs, CGTP matches Adam and KFAC in accuracy with an order of magnitude fewer outer iterations. Our experiments are proof-of-concept, restricted to small MLPs; extending to the stochastic setting, structured architectures, and principled preconditioning for ill-conditioned problems such as PINNs are natural next steps.

\newpage

\section*{AI Tool Declaration}

In this work, we used generative AI tools primarily for fine-tuning and validation: 
\begin{itemize}
\item Derivations were submitted to Claude to check for errors, and fixes devised where they were raised.  Examples abound: the indexing scheme of Lemma~\ref{lem:feasibility} was previously incorrect (the correction verified through a simple numpy script), and the previous definition of $B_n$ in Section~\ref{sec:multi-layer} only held for invertible layers, leading to the correction term in \eqref{eq:choice-of-B-when-not-invertible}).  
\item Our code was profiled and optimised using Claude code tools: for example, the push through identity was adopted in several places, in addition to certain refactors that flagged bottlenecks introduce by logging, and permitted acceleration through use of pytorch compilation tools.  
\end{itemize}

We take responsibility for the final content of this work, including text, claims or artifacts produced with the aid of generative AI.

\bibliography{ref}
\bibliographystyle{iclr2027_conference}

\newpage

\appendix

\newpage
\newpage
\begin{appendix}
\section*{Supplementary Materials}

\startcontents[appendix]

\printcontents[appendix]{l}{0}{\setcounter{tocdepth}{2}}

\section{Related Work}\label{sec:related-work}

\paragraph{Practical TP variants.} Among backprop alternatives, the TP literature is comparatively sparse, in part due to the overhead historically associated with the method. The requirement of approximate inverses is restrictive: the canonical approach of training layerwise autoencoders introduces many iterations of feedback-weight training between forward updates, additional per-layer learning rates, and instability that worsens with depth \citep{Bartunov2018AssessingTS}. What's more, these approaches have weaker theoretical grounding, as it is difficult to characterize how close feedback functions are to the true Jacobian inverse.  Much subsequent work has focused on eliminating or simplifying these feedback weights.  \citep{Meulemans2020ATF} examines direct feedback pathways with similarities to DFA \citep{Nkland2016DirectFA}; FW-DTP \citep{Shibuya2022FixedWeightDT} pushes this further, arguing that DTP regularizes strongly enough that even a fixed random feedback matrix suffices. Our work is closer in spirit to \citep{Roulet2021TargetPV}, which leverages autograd to compute the true Jacobian and employ a linearized version of Target Propagation.
We extend this direction by embedding the inversion in a trust-region framework, separating the inverse's fidelity from its utility as a step-direction estimator. The inner-loop iterations our method incurs are spent solving a subproblem that approximates the true objective, rather than training feedback parameters that are discarded after training.

\paragraph{TP and Gauss-Newton.} The connection between TP and Gauss-Newton was first made by \citep{BengioIteratingApprox} and given more rigorous treatment by \citep{Meulemans2020ATF}, whose proof we recount in Section~\ref{sec:naive-gn}. Subsequent work has largely retreated from the GN framing. \citep{Roulet2021TargetPV} attempt to leverage the differential prescription but raise several objections to the proof in their appendix, ultimately focusing on RNN training and concluding that GN is unlikely to explain TP's empirical success. Other recent work abandons the GN connection entirely: LDRL \citep{Ernoult2022TowardsSD} aims to reproduce BP gradients directly, achieving competitive results on CIFAR-style benchmarks, while \citep{Bao2024EfficientTP} derive closed-form feedback functions via a Jacobian-matching loss. Both rely on alignment with BP gradients - positive but imperfect - which limits convergence speed by construction. We take the GN connection more seriously: solving the local subproblem properly turns TP from a method that can approximate BP into one that can potentially outpace it.

\paragraph{Second-order optimization.} 
A range of works have applied second-order machinery to training deep networks. Block-diagonal approximations of the curvature matrix are common, supported by empirical evidence \citep{Martens2015OptimizingNN, shampoo, bdapch}, and our work is closest in spirit to KFAC \citep{Martens2015OptimizingNN}, which targets the Gauss-Newton approximation and uses Levenberg-Marquardt-style damping. Our derivation differs in its treatment of this block-diagonal structure: rather than assume it, we obtain it as an upper bound on the Gauss-Newton subproblem via  Theorem~\ref{thm:majorization}.  KFAC is itself derived as a special case of our setup in Corrollary~\ref{cor:kfac}.  A connection between TP and KFAC was previously suggested in~\citep{FOOF} based on superficial similarity in their appearance.  Our work rigorously establishes this connection.

\section{What do the TP losses mean?}\label{sec:refined-tp}

The conventional presentation of Section~\ref{sec:conventional-tp} is operationally complete: it specifies how to compute targets and how to update weights. This subsection examines several choices that the presentation leaves implicit, and that bear on the theoretical status of the local subproblem. We surface them because they motivate the framing adopted in subsequent sections (and in particular, the surrogate loss of Section~\ref{sec:loss}).

First, we consider why Equation~\eqref{eq:infer-head-target} makes for a sensible target.  In backpropagation, the weights are the principal optimization variable.  Under backpropagation, thee gradients are computed via the chain rule, and the weight update aggregates the per-sample loss gradients:
\begin{equation}
    [\Delta W^{(l)}_{\text{BP}}]_{ij} = \sum_n \frac{\partial \mathcal{L}_n}{\partial W^{(\ell)}_{ij}}; \quad \frac{\partial \mathcal{L}_n}{\partial W^{(\ell)}_{ij}} = \Big(\sum_k \frac{\partial \mathcal{L}_n}{\partial z^{(\mathrm{out})}_k} J^{(\mathrm{out},\ell+1)}_{ki}(x_n)\Big)\, u^{(\ell)}_j(x_n). \label{eq:per-sample-gradient}
\end{equation}
Target propagation takes a different approach, promoting the intermediate pre-activations $z^{(\ell)}_n$ as principal.  To this end, note that the per-sample gradient vanishes as $\|\partial \mathcal{L}_n / \partial z^{(\mathrm{out})}\| \to 0$.  
A gradient descent step on the outputs $z^{(\mathrm{out})}$, as in Equation~\ref{eq:infer-head-target}, is one way to drive this.  

With the output targets specified, we turn towards the hidden layers.  As these are connected to the output by an invertible map, the natural choice for an intermediate target is the inverse image.  We can make this intuition more rigorous as follows: with the downstream targets $z^{(out)}_n$ fixed, a natural loss function for $z^{(\ell)}_n$ is the following:
\begin{equation}
    \mathcal{L}^{(\ell)}_n = ||F^{(out,\ell)}(z^{(\ell)}_n) -\hat{z}^{(out)}_n||^2
\end{equation}
Note, by comparison with \eqref{eq:per-sample-gradient}, that $\nabla_{z^{(\ell)}} \mathcal{L}_n^{(l)} =  \nabla_{z^{(\ell)}} \mathcal{L}_n$.  If $F^{(out,\ell)}$ has a corresponding inverse $[F^{(out,\ell)}]^{-1}$, the above may be re-written:
\begin{equation}
    \mathcal{L}^{(\ell)}_n = ||F^{(out,\ell)}(z^{(\ell)}_n) - F^{(out,\ell)} \circ [F^{(out,\ell)}]^{-1}(\hat{z}^{(out)}_n) ||^2
\end{equation}
If the composite function $F^{(L,\ell)}$ is $C^{(L,\ell)}$-Lipschitz, then we may bound the above loss as follows:
\begin{equation}
    \mathcal{L}^{(\ell)}_n \leq C^{(L,\ell+1)} ||z^{(\ell)}_n - [F^{(out,\ell)}]^{-1}(\hat{z}^{(out)}_n)||^2 \label{eq:tp-lipschitz-bound}
\end{equation}
We recognize the norm on the right-hand side as $\beta \mathcal{L}_{\text{TP}}$ - implying that the local TP loss, for $\beta=1$ is an upper bound on the BP loss for the targets.

\section{Proofs and Extended Remarks}

\subsection{Proof of Theorem~\ref{thm:uniform-decoupling-property}: Norm-Induced Layer Separation}\label{appendix:proof-of-uniform-norm}

Recall the setup of the multi-layer subproblem from Section~\ref{sec:multi-layer}: let $J_n$ denote the stacked Jacobian and  $\Delta z_n$ the stacked perturbation, as follows:
\begin{equation}
  J_n \coloneq \big[\, I \;\big|\; J^{(\mathrm{out}, L)}_n \;\big|\; \cdots \;\big|\; J^{(\mathrm{out}, 1)}_n \,\big]; \quad \Delta z_n \coloneq \big[\, \Delta z^{(\mathrm{out})\top}_n \;\big|\; \Delta z^{(L)\top}_n \;\big|\; \cdots \;\big|\; \Delta z^{(1)\top}_n \,\big]^\top
\end{equation}
The regularised, multi-layer subproblem is given by:
\begin{equation}
    m_n(\Delta z_n; \lambda) = \;\big\| r_n + H_n J_n \Delta z_n \big\|_{M_n}^2 + \tfrac{1}{2}\lambda \sum_{\ell=1}^{L+1} \big(\Delta z^{(\ell)}_n\big)^\top B_n^{(\ell)} \Delta z^{(\ell)}_n 
\end{equation}
In the derivation to come, we will require the following definition and lemma:

\begin{definition}[$S$-Regularised Residual] 
\label{def:s-regularised-residual}
Given the residual $r_n \in \mathbb{R}^{C}$, output Hessian $H_n \in \mathbb{R}^{C \times C}$, output metric $M_n \in \mathbb{R}^{C\times C}$, Lagrange multiplier $\lambda$, and a choice of PD matrix $S_n \in \mathbb{R}^{C\times C}$, the $S$-regularised residual is:
\begin{equation}
   \tilde r_n(\lambda;S_n) = \big( H_n^{\top} M_n H_n + \lambda S_n^{-1} \big)^{-1} H_n^{\top} M_n r \label{eq:regularised-residual}
\end{equation}
\end{definition}

\begin{lemma}[Properties of the full Jacobian]
\label{lem:surjectivity-of-J} 
$J_n$ is surjective; furthermore, for any $B_n \succ 0$, the matrix $S_n = J_n B_n^{-1} J_n^{\top} \in \mathbb{R}^{K \times K}$ is positive definite.
\end{lemma}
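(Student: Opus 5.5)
Both claims follow from the leading identity block of $J_n$. Recall from \eqref{eq:stacked-vars} that
\[
J_n = \big[\, I \;\big|\; J^{(\mathrm{out},L)}_n \;\big|\; \cdots \;\big|\; J^{(\mathrm{out},1)}_n \,\big].
\]
The first block is the identity on the output space, of dimension $C$. The statement writes the result as a $K\times K$ matrix; we read this as the output dimension, since $J_n$ maps into the output space and so $S_n$ is square of that size. We prove surjectivity first and then positive definiteness.

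\textbf{Surjectivity.} Take any $v$ in the output space. Choose the stacked perturbation $\Delta z_n = (v, 0, \dots, 0)$, so that $\Delta z^{(\mathrm{out})}_n = v$ and every other block is zero. Then $J_n \Delta z_n = I v = v$. Hence $\mathrm{range}(J_n)$ is the whole output space, and $J_n$ has full row rank. This is true whatever the hidden-layer Jacobians are, including when they are singular, which is the point of the lemma given \ref{asm:invert}.

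\textbf{Positive definiteness.} Since $B_n \succ 0$, its inverse $B_n^{-1}$ exists and is also symmetric positive definite. For any $x$,
\[
x^\top S_n x = (J_n^\top x)^\top B_n^{-1} (J_n^\top x) \ge 0,
\]
with equality only if $J_n^\top x = 0$. Full row rank of $J_n$ gives $\ker J_n^\top = \mathrm{range}(J_n)^\perp = \{0\}$. More directly, the first block of $J_n^\top x$ is $I x = x$, so $J_n^\top x = 0$ forces $x = 0$. Therefore $S_n \succ 0$, and $S_n$ is invertible. The term $\lambda S_n^{-1}$ in Definition~\ref{def:s-regularised-residual} needs exactly this invertibility, which is why the lemma is stated.

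\textbf{Expected obstacle.} There is little real difficulty. The only care needed is with dimensions: $B_n = \bigoplus_\ell B^{(\ell)}_n$ must be conformable with $J_n$, with the output block included as the $\ell = L+1$ term. If one wants the block form, the inverse is $B_n^{-1} = \bigoplus_\ell [B^{(\ell)}_n]^{-1}$, so
\[
S_n = \sum_\ell J^{(\ell)}_n [B^{(\ell)}_n]^{-1} J^{(\ell)\top}_n .
\]
The output block contributes $[B^{(\mathrm{out})}_n]^{-1} \succ 0$, and every other term is positive semidefinite. This gives $S_n \succ 0$ a second way.
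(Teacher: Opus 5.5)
Your proof is correct and follows the paper's argument: the identity block makes the first block of $J_n^\top x$ equal to $x$, so $x^\top S_n x = \|B_n^{-1/2} J_n^\top x\|^2 > 0$ for every $x \neq 0$. You go slightly further by writing out the surjectivity step explicitly (the paper only shows that $J_n^\top$ is injective, which is equivalent), and you are right to read the $K \times K$ in the statement as the output dimension $C$.
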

\begin{proof}
    For any $\Delta z_n \neq 0$, the first block of $J_n^{\top}\Delta z$ is simply $\Delta z^{(\mathrm{out})}_n$, due to the identity matrix in \ref{eq:stacked-vars}.  Hence, $J_n^{\top} \Delta z_n \neq 0$, and thus $\Delta z_n^{\top} S_n \Delta z_n = \lVert B^{-1/2}_n J_n^{\top} \Delta z_n \rVert^{2} > 0$.
\end{proof}

We are now in a position to prove the theorem.  Taking the gradient of \ref{eq:regularised-multilayer-subproblem} with respect to $\Delta z_n$, we obtain the normal equations:
\begin{equation}
    H_n^{\top}M_nH_nJ_n \Delta z^{\star}_n + \lambda B_n \Delta z^{\star}_n  = -H_n^{\top}M_nr_n \label{eq:b-normal-equations}
\end{equation}
Left multiplying by $J_nB^{-1}_n$:
\begin{equation}
    S_n H_n^{\top} M_n H_n (J \Delta z^{\star}_n) + \lambda (J \Delta z^{\star}_n) = - S_n H^{\top}_n M_n r
\end{equation}
Then, by Lemma\ref{lem:surjectivity-of-J}, $S_n^{-1}$ exists, and we have (using Definition~\ref{def:s-regularised-residual}):
\begin{equation}
    J_n \Delta z_n^{\star} = \tilde r_n(\lambda;S_n)
\end{equation}
Accordingly, any choice of metric $B_n$ is sufficient to separate the layers.  However, in what follows, we will see that a diagonal $B_n$ holds additional advantages.  To this end, re-examine the normal equations in \ref{eq:b-normal-equations}, and consider the ansatz $\Delta z^{\star}_n = - B^{-1}_n J^{\top}_n w$ for some $w \in \mathbb{R}^C$.  Substituting into the left-hand-side of \ref{eq:b-normal-equations}:
\begin{align}
    \big( J^{\top}_n H_n^{\top} M_n H_n J_n + \lambda B_n \big) \big( -B_n^{-1} J_n^{\top} w \big) &= - J_n^{\top} H_n^{\top} M_n H_n S_n w - \lambda J_n^{\top} w \\
    &= - J_n^{\top} \big( H_n^{\top} M_n H_n + \lambda S_n^{-1} \big) S_n w
\end{align}
This equals the right side of \ref{eq:b-normal-equations} when$S_n w = \tilde r_n(\lambda;S_n)$. Therefore, an alternative (and by uniqueness, equivalent) expression for the minimiser is:
\begin{equation}
    \Delta z^{\star}_n = - B_n^{-1} J_n^{\top} S_n^{-1} \tilde r_n(\lambda;S_n)
\end{equation}
$J_n^{\top} S_n^{-1} \tilde r_n(\lambda;S_n)$ is the backpropagated regularised target, and $B_n^{-1}$ acts as a preconditioner.  Block diagonality of $B_n$ ensures this feasible to compute. The computation of $S_n$ also simplifies:
\begin{equation}
    S = [B^{(\mathrm{out})}]^{-1} + \sum_{\ell} C^{(\ell)} \qquad C^{(\ell)} = J^{(\ell)} [B^{(\ell)}]^{-1} J^{(\ell)\top} \succeq 0 \label{eq:expansion-of-s-for-block-diag}
\end{equation}
To proceed, observe that, if we consider perturbations $\Delta z_n \sim \mathcal{N}(0, B^{-1})$,  the matrix $S_n = J_n B_n^{-1} J_n^{\top}$ represents the covariance of $J_n\Delta z_n$ - ie the pushforward of the prior covariance at the output.  Thus, beyond numerical convenience, the choice $S=I$ suggests every target-space direction equally likely a priori, hence we refer to it as the \textbf{egalitarian prior}.

In the invertible case, it is clear that the following choice achieves $S=I$:
\begin{equation}
    B^{(\ell)}_n = \frac{1}{\alpha_n^{(\ell)}} J^{(\ell)\top}_n J^{(\ell)}_n; \quad \alpha_n^{(\ell)} > 0,\; \sum_{\ell} \alpha_n^{(\ell)} = 1
\end{equation}
Any choice of $\alpha_n^{(\ell)}$ on the simplex will work.  The TP literature commonly makes the uniform choice $\alpha_n^{(\ell)} = (L+1)^{-1}$.  When the layers are not invertible, we include a small regularisation term as follows:
\begin{equation}
    \tilde{B}^{(\ell)} = \frac{1}{\alpha^{(\ell)}_n} \big( J_n^{(\ell)\top} J_n^{(\ell)} + \epsilon I \big) \label{eq:choice-of-B-when-not-invertible}
\end{equation}

\subsection{Remark: Handling Non-Invertible Output Layers}\label{appendix:proofs:non-invertible}

A standard deep neural network has a characteristic shape: let $D$ denote the input dimension, $K$ the dimension of the intermediate activations, and $C$ the output dimension, with the common ordering $D > K \gg C$. The first layer is non-invertible (it maps $D$-dimensional inputs to $K$-dimensional activations), but this poses no difficulty for our derivation, since we never propagate corrective targets to the input. The output layer is the genuine concern: $W^{(\mathrm{out})} \in \mathbb{R}^{K \times C}$ collapses the $K$-dimensional activation space onto $C$ outputs, with $K \gg C$ implying a large null space. The output map deletes substantial information about the activations, and we must specify how to handle this asymmetry.

Our strategy is to make the deleted information manifest. Decompose the output weight as $W^{(\mathrm{out})} = U S V^\top$ via the singular value decomposition, with $S = [S_\parallel, \,0]$ partitioned into the active singular values and the trailing zero block. The columns of $V$ split correspondingly into $V = [V_\parallel, V_\perp]$: components of the input lying in $\operatorname{col}(V_\parallel)$ are mapped to the output, while components in $\operatorname{col}(V_\perp)$ are mapped to zero. The latter are precisely the directions in activation space that the output map deletes.

We make these directions explicit by augmenting the forward map:
\begin{equation}
  \begin{pmatrix} z^{(\mathrm{out})}_n \\ c_n \end{pmatrix}
  =
  \begin{pmatrix} U S_\parallel & 0 \\ 0 & I \end{pmatrix}
  V^\top u^{(L)}_n + o b^\top,
  \qquad
  o = \begin{pmatrix} 1_\parallel \\ 0 \end{pmatrix},
\end{equation}
where $c_n$ records the components that the original output map discards, and $o$ is a mask that suppresses the bias on the augmented coordinates. The augmented map is invertible by construction: $V$ is orthogonal, the upper block $U S_\parallel$ is the original output map, and the lower-block identity preserves the previously-deleted directions.

The corresponding inverse problem for the activation perturbation is then
\begin{equation}
  \begin{pmatrix} \Delta z^{(\mathrm{out})}_n \\ \Delta c^{(\mathrm{out})}_n \end{pmatrix}
  =
  \begin{pmatrix} U S_\parallel & 0 \\ 0 & I \end{pmatrix}
  V^\top \Delta u^{(L)}_n + o b^\top.
\end{equation}

The solution to this inverse problem is
\begin{equation}
  \Delta u^{(L)}_n = W^+ \big(\Delta z^{(\mathrm{out})}_n - b\big) + V_\perp \Delta c^{(\mathrm{out})}_n,
\end{equation}
where $W^+$ denotes the pseudoinverse of $W^{(\mathrm{out})}$. Accommodating non-invertible layers thus reduces to specifying $\Delta c^{(\mathrm{out})}_n$ - a prescription for how to update the components of the activation that the loss does not directly constrain. We adopt the simplest choice, $\Delta c^{(\mathrm{out})}_n = 0$, equivalent to using the pseudoinverse of the original (unaugmented) forward map: in directions where the loss offers no corrective feedback, leave the activation unchanged.

\subsection{Classical TP as Projected Gradient Descent}\label{appendix:proof:tp-as-pgd}

Differentiating the per-datapoint reconstruction loss with respect to $W^{(\ell)}$ gives
\begin{equation}
      \frac{\partial}{\partial W^{(\ell)}} \big\| W^{(\ell)\top} u^{(\ell)}_n + b^{(\ell)} - \hat{z}^{(\ell+1)}_n \big\|^2 
  \;=\; -2\, u^{(\ell)}_n \big( \hat{z}^{(\ell+1)}_n - W^{(\ell)\top} u^{(\ell)}_n - b^{(\ell)} \big)^{\!\top}.
\end{equation}
Since $z^{(\ell+1)}_n = W^{(\ell)\top} u^{(\ell)}_n + b^{(\ell)}$ and $\hat{z}^{(\ell+1)}_n = z^{(\ell+1)}_n + \Delta z^{(\ell+1)}_n$, the residual collapses to $\Delta z^{(\ell+1)}_n$. Summing over $n$ and folding the bias into $\tilde{u}$ yields, in matrix form,
\begin{equation}
    \Delta W^{(\ell)}_{\mathrm{TP}} = 2\, \tilde{U}^{(\ell)\top} \Delta Z^{(\ell+1)}.
\end{equation}
The induced pre-activation perturbation is
\begin{equation}
    \Delta\bar{Z}^{(\ell+1)}_{\textrm{PGD}}
  = \eta \, \tilde{U}^{(\ell)} \Delta W^{(\ell)}_{\mathrm{TP}}
  = 2\eta \, \tilde{U}^{(\ell)} \tilde{U}^{(\ell)\top} \Delta Z^{(\ell+1)}, \label{eq:classical-tp-as-pgd}
\end{equation}
which lies in $\mathrm{col}(\tilde{U}^{(\ell)})$ and therefore satisfies $\Pi^{(\ell)} \Delta \bar{Z}^{(\ell+1)}_{\textrm{PGD}} = 0$ by Lemma~\ref{lem:feasibility}.

\subsection{Proof of Lemma~\ref{lemma:stacked-perturbation-contraint}: Feasibility of stacked perturbations}\label{sec:shared-feasibility}

For each datapoint, stack the perturbations of all layers,
\begin{equation}
\Delta z_n = \big( \Delta z_n^{(L+1)}, \Delta z_n^{(L)}, \dots, \Delta z_n^{(1)} \big)
\in \mathbb{R}^{D}, \qquad D = \sum_{\ell} K_{\ell},
\end{equation}
and stack these across the batch, $\Delta \bm{z} = (\Delta z_1, \dots, \Delta z_N)
\in \mathbb{R}^{ND}$. In this ordering the curvature is block-diagonal,
$\bm{G} = \bigoplus_n G_n$ with $G_n = J_n^{\top} H_n^{\top} M_n H_n J_n$. Let
$E_{\ell} \in \mathbb{R}^{K_{\ell} \times D}$ select the block of layer $\ell$, so
that $\Delta z_n^{(\ell)} = E_{\ell} \Delta z_n$ and
$\mathrm{vec}_r(\Delta Z^{(\ell)}) = (I_N \otimes E_{\ell}) \Delta \bm{z}$. The
constraint of Lemma~\ref{lem:feasibility} for layer $\ell$ then reads
\begin{equation}
\big( \Pi^{(\ell-1)} \otimes I_{K_{\ell}} \big)\big( I_N \otimes E_{\ell} \big)
\Delta \bm{z} = \big( \Pi^{(\ell-1)} \otimes E_{\ell} \big) \Delta \bm{z} = 0 .
\end{equation}
Since the $E_{\ell}$ select disjoint blocks, $E_{\ell} E_{\ell'}^{\top} = 0$ for
$\ell \ne \ell'$ and $\sum_{\ell} E_{\ell}^{\top} E_{\ell} = I_D$, the operators
$\Pi^{(\ell-1)} \otimes E_{\ell}^{\top} E_{\ell}$ are mutually orthogonal
projectors, and their sum
\begin{equation}
\bm{\Pi} = \sum_{\ell} \Pi^{(\ell-1)} \otimes E_{\ell}^{\top} E_{\ell}
\end{equation}
is itself an orthogonal projector on $\mathbb{R}^{ND}$. The constraints for all
layers are therefore equivalent to the single condition
$\bm{\Pi} \, \Delta \bm{z} = 0$.
\subsection{Proof of Theorem~\ref{thm:full-joint-subproblem}: KKT Conditions for the Constrained Subproblem}
\label{app:kkt}

This appendix collects the KKT analysis of constrained quadratic minimization with linear equality constraints, and applies it to the multi-layer subproblem of Theorem~\ref{thm:full-joint-subproblem}. The generic result is standard \citep{nocedal2006numerical}; we state and prove it here for completeness, since it is invoked at several points in the main text.

\begin{lemma}[KKT solution for equality-constrained quadratics]
\label{lem:kkt-generic}
Let $H \in \mathbb{R}^{m \times m}$ be symmetric positive definite, $D \in \mathbb{R}^{q \times m}$ have full row rank, and $x^\star \in \mathbb{R}^m$ be given. The unique minimizer of
\begin{equation}
  \min_{x \in \mathbb{R}^m} \;(x - x^\star)^\top H (x - x^\star)
  \quad \text{subject to} \quad D x = 0
  \label{eq:generic-quadratic}
\end{equation}
is
\begin{equation}
  x = (I - H^{-1} D^\top \big(D H^{-1} D^\top\big)^{-1} D) x^\star.
  \label{eq:schur-projection}
\end{equation}
The matrix $D H^{-1} D^\top$ is the Schur complement of the KKT system; it is invertible under the stated assumptions.
\end{lemma}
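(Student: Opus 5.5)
The plan is the standard Lagrangian argument for an equality-constrained strictly convex quadratic. First I would check well-posedness. Since $H \succ 0$, the objective is strictly convex, and the feasible set $\ker D$ is a nonempty affine (in fact linear) subspace. Hence a minimizer exists and is unique. Convexity together with linear constraints also means the KKT conditions are both necessary and sufficient, so it is enough to exhibit a KKT point.

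Next I would form the Lagrangian
\[
\mathcal{L}(x,\nu) = (x-x^\star)^\top H (x-x^\star) + 2\nu^\top D x .
\]
The factor $2$ on the multiplier term is only for convenience. Stationarity in $x$ gives
\[
H(x - x^\star) + D^\top \nu = 0, \qquad\text{so}\qquad x = x^\star - H^{-1}D^\top \nu .
\]
Substituting this into primal feasibility $Dx = 0$ yields $D x^\star = (D H^{-1} D^\top)\nu$.

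The one step needing care is the invertibility of the Schur complement $D H^{-1} D^\top$. I would argue as follows. For any $y \neq 0$, full row rank of $D$ gives $D^\top y \neq 0$. Then $y^\top D H^{-1} D^\top y > 0$ because $H^{-1} \succ 0$. So the Schur complement is symmetric positive definite, hence invertible, and
\[
\nu = (DH^{-1}D^\top)^{-1} D x^\star .
\]
Substituting back gives $x = \big(I - H^{-1}D^\top (DH^{-1}D^\top)^{-1} D\big)x^\star$, which is equation~\eqref{eq:schur-projection}. By sufficiency of KKT and the uniqueness established at the start, this point is the minimizer.

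No step is a real obstacle. The only subtlety worth flagging concerns the application to Theorem~\ref{thm:full-joint-subproblem}: there $\bm{\Pi}$ is a projector and so does not have full row rank. In that setting I would either restrict $D$ to an orthonormal basis of $\operatorname{range}\bm{\Pi}$, or replace the inverse with a pseudo-inverse on the range. Neither change is needed for the lemma as stated.
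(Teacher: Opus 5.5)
Your proposal is correct and follows essentially the same route as the paper: Lagrangian stationarity, substitution into $Dx=0$ to solve for the multiplier through the Schur complement, back-substitution, and uniqueness from strict convexity. Your factor of $2$ on the multiplier only rescales $\nu$. You go slightly further than the paper in two ways. You actually prove $DH^{-1}D^\top \succ 0$, which the paper merely asserts. Your caveat is also fair: in the application to Theorem~\ref{thm:full-joint-subproblem}, $\bm{\Pi}$ is a projector without full row rank, and the paper does not address this.
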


\begin{proof}
The Lagrangian of \eqref{eq:generic-quadratic} is
\[
  \mathcal{L}(x, \nu) = (x - x^\star)^\top H (x - x^\star) + \nu^\top D x.
\]
Stationarity with respect to $x$ gives
\[
  2H(x - x^\star) + D^\top \nu = 0
  \;\Longrightarrow\;
  x = x^\star - \tfrac{1}{2} H^{-1} D^\top \nu.
\]
Substituting into the feasibility constraint $D x = 0$:
\[
  D x^\star - \tfrac{1}{2} D H^{-1} D^\top \nu = 0
  \;\Longrightarrow\;
  \nu = 2 \big(D H^{-1} D^\top\big)^{-1} D x^\star,
\]
where invertibility of $D H^{-1} D^\top$ follows from $H \succ 0$ and $D$ having full row rank. Substituting back yields \eqref{eq:schur-projection}. Uniqueness follows from strict convexity of the objective on the affine feasible set.
\end{proof}

\begin{remark}[Geometric interpretation]
\label{rem:kkt-geometry}
The right-hand side of \eqref{eq:schur-projection} is the projection of $x^\star$ onto $\ker D$ in the inner product $\langle u, v \rangle_H = u^\top H v$. When $H = I$, this reduces to the standard Euclidean projection $x = x^\star - D^\top (DD^\top)^{-1} D x^\star$. For $H \neq I$, the projection is in the metric induced by $H$, and is the unique minimizer of the $H$-weighted distance to $x^\star$ over the feasible set.
\end{remark}

\subsection{Proof of Theorem~\ref{thm:majorization}: Simplicial Decomposition of the Global Subproblem}\label{appendix:proof:simplectic-solutions}

\paragraph{Motivation:} Recall the unregularised multi-layer subproblem from~\ref{eq:multi-layer-target-prop}:
\begin{equation}
  m_n(\Delta z_n) = \;\big\| r_n + H_n J_n \Delta z_n \big\|_{M_n}^2
\end{equation}
Let $\alpha_n$ be a set of layer-weight which lie in the interior of an $(L + 1)$ simplex, such that:
\begin{equation}
    \forall \ell\qquad \alpha^{(\ell)}_n > 0, \qquad \sum_{\ell=1}^{L+1} \alpha^{(\ell)}_n = 1
\end{equation}
It's clear from substitution that any layer perturbations $\Delta z^{(\ell)}_n$ satisfying
\begin{equation}
  H_n J^{(\ell)}_n \Delta z^{(\ell)}_n(\alpha^{(\ell)}_n) = -\alpha^{(\ell)}_n r_n,
  \qquad \ell = 1, \ldots, L+1,
  \label{eq:alpha-parameterization}
\end{equation}
will reduce $m_n$ to 0.  The difficulty lies when we have non-zero $\lambda$ and we minimize the $m_n(\Delta z_n; \lambda)$ of \ref{thm:uniform-decoupling-property}.  As we will see, this will complicate the relationship between $\alpha^{(\ell)}_n$ and $\Delta z^{(\ell)}_n$.

\begin{lemma}[A Majoriser on the GN Subproblem]
\label{lem:majoriser-on-lm}
Let $\alpha^{(\ell)}_n$ lie on the simplex.  Then the following is a majoriser of the un-regularised subproblem in Theorem~\ref{thm:uniform-decoupling-property}:
\begin{equation}
    \hat{m}_n(\Delta z_n; \alpha_n) = \sum_{\ell} \Big\| \sqrt{\alpha^{(\ell)}_n}\, r_n
  + \tfrac{1}{\sqrt{\alpha^{(\ell)}_n}} H_nJ^{(\ell)}_n \Delta z^{(\ell)}_n \Big\|_{M_n}^2 
\end{equation}

\end{lemma}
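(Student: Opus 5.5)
The proof is a weighted Jensen (equivalently, Cauchy--Schwarz) argument on the convex quadratic $x \mapsto \|x\|_{M_n}^2$, followed by a check of the majoriser's defining property. We use $M_n \succeq 0$, so $\|x\|_{M_n}^2 = \|M_n^{1/2}x\|^2$ is convex.

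\textbf{Rewriting the residual.} Since $\sum_\ell \alpha^{(\ell)}_n = 1$, we write $r_n = \sum_\ell \alpha^{(\ell)}_n r_n$. Because $H_n J_n \Delta z_n = \sum_\ell H_n J^{(\ell)}_n \Delta z^{(\ell)}_n$ (this is the block structure of $J_n$ in \eqref{eq:stacked-vars}), we get
\begin{equation*}
r_n + H_n J_n \Delta z_n = \sum_\ell \alpha^{(\ell)}_n \, x_\ell, \qquad x_\ell \coloneqq r_n + \tfrac{1}{\alpha^{(\ell)}_n} H_n J^{(\ell)}_n \Delta z^{(\ell)}_n .
\end{equation*}
The identity block, acting on $\Delta z^{(\mathrm{out})}_n$, is treated as one more layer, with $J^{(\mathrm{out})} = I$.

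\textbf{Applying Jensen.} Jensen's inequality with weights $\alpha^{(\ell)}_n$ gives $\|\sum_\ell \alpha^{(\ell)}_n x_\ell\|_{M_n}^2 \le \sum_\ell \alpha^{(\ell)}_n \|x_\ell\|_{M_n}^2$. Moving $\alpha^{(\ell)}_n$ inside the norm as $\sqrt{\alpha^{(\ell)}_n}$ turns the $\ell$-th term into exactly $\|\sqrt{\alpha^{(\ell)}_n}\, r_n + \alpha^{(\ell)-1/2}_n H_n J^{(\ell)}_n \Delta z^{(\ell)}_n\|_{M_n}^2$. This gives $m_n \le \hat m_n$ for every $\Delta z_n$. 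Alternatively, Cauchy--Schwarz can be applied directly to $\sum_\ell \sqrt{\alpha^{(\ell)}_n}\,\big(\sqrt{\alpha^{(\ell)}_n}\, M_n^{1/2} x_\ell\big)$.

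\textbf{Tightness.} The main obstacle is to pin down what ``majoriser'' should mean here, since MM requires tightness at some point and not just an upper bound. I would show that equality holds if and only if all the $M_n^{1/2}x_\ell$ coincide. This follows from strict convexity of $\|\cdot\|^2$ on the range of $M_n^{1/2}$, or from the equality case of Cauchy--Schwarz. Given any $\Delta z_n$, choosing $\alpha^{(\ell)}_n$ so that $H_n J^{(\ell)}_n \Delta z^{(\ell)}_n$ are proportional to a common vector with coefficients $\alpha^{(\ell)}_n$ makes the bound tight. In particular, on the family \eqref{eq:alpha-parameterization} both sides vanish, so $\hat m_n$ touches $m_n$ at its minimisers.

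I would also note the scalar-free version: the statement here drops the factor $\tfrac12$ used in Theorem~\ref{thm:majorization}, and the inequality is homogeneous, so it is unaffected. Finally, if some interior point $\alpha$ is not attainable as a tight point for a given $\Delta z_n$, the lemma still holds as a pure upper bound, and that is all it asserts.
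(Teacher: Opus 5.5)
Your upper bound is the paper's argument: regroup $r_n+H_nJ_n\Delta z_n$ as the convex combination $\sum_\ell\alpha^{(\ell)}_n x_\ell$ and apply Jensen to $v\mapsto\|v\|_{M_n}^2$.

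The gap is in what you offer as tightness. The paper reads ``majoriser'' in the MM sense at the point where the surrogate is expanded, which here is $\Delta z_n=0$; this is also what ``at every value of $\alpha_n$'' in Theorem~\ref{thm:majorization} means. For every $\alpha_n$ in the simplex, the paper checks two things:
\begin{itemize}
\item tangency in value, $\hat m_n(0;\alpha_n)=m_n(0)$;
\item tangency in gradient, $\nabla_{\Delta z^{(\ell)}_n}\hat m_n(0;\alpha_n)=g^{(\ell)}_n=\nabla_{\Delta z^{(\ell)}_n}m_n(0)$.
\end{itemize}
You never identify this contact point and never address the gradient. Instead you offer contact on the family in \eqref{eq:alpha-parameterization}. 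That is true, since both sides vanish there, but it is not the expansion point.

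You also claim that for any $\Delta z_n$ a suitable $\alpha_n$ makes the bound tight, and this is false. By your own equality condition, all $M_n^{1/2}x_\ell$ must coincide. That means $M_n^{1/2}H_nJ^{(\ell)}_n\Delta z^{(\ell)}_n=\alpha^{(\ell)}_n w$ for one common $w$, so the layer contributions must be positive multiples of a single vector (or all zero). This fails for generic $\Delta z_n$, and then no interior $\alpha_n$ gives equality.

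The fix uses what you already have. At $\Delta z_n=0$ every $x_\ell$ equals $r_n$, so your equality criterion gives $\hat m_n(0;\alpha_n)=m_n(0)$ for every $\alpha_n$. Gradient tangency then follows automatically. The function $\hat m_n(\cdot;\alpha_n)-m_n$ is smooth, nonnegative, and vanishes at the origin, so the origin is a global minimiser and its gradient vanishes there. Alternatively, differentiate directly as the paper does: the gradient is $g^{(\ell)}_n+\tfrac{1}{\alpha^{(\ell)}_n}G^{(\ell)}_n\Delta z^{(\ell)}_n$, which equals $g^{(\ell)}_n$ at $0$.

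This first-order agreement at $\Delta z_n=0$, for every $\alpha_n$, is what lets the surrogate stand in for $m_n$ in the trust-region model. It also requires the constant factors on $m_n$ and $\hat m_n$ to agree. So your homogeneity remark about the dropped $\tfrac12$ is harmless only if both sides are rescaled together; the bare inequality survives a mismatch, but contact at the origin does not.
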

\begin{proof}

\textbf{Bounded above}\hspace{1em}Because $\sum_{\ell} \alpha^{(\ell)}_n =1$, we may write $r_n = \sum_\ell \alpha^{(\ell)} r_n$ and regroup the argument of the norm as a convex combination:
\begin{equation}
    r_n + \sum_{\ell=1}^{L+1} H_nJ^{(\ell)}_n \Delta z^{(\ell)}_n = \sum_{\ell=1}^{L+1} \alpha^{(\ell)}_n \left( r_n + (\alpha^{(\ell)}_n)^{-1} H_n J^{(\ell)}_n \Delta z^{(\ell)}_n \right) 
\end{equation}
The map $v \mapsto \lVert v \rVert_{M_n}^2$ is convex (and strictly so when $M \succ 0$) so we may apply Jensen's inequality as follows:
\begin{equation}
    m(\Delta z) \leq \frac{1}{2} \sum_{\ell=1}^{L+1} \alpha^{(\ell)}_n \left\lVert r_n + (\alpha^{(\ell)}_n)^{-1} H_nJ^{(\ell)}_n \Delta z^{(\ell)}_n \right\rVert_{M_n}^2  \coloneqq \frac{1}{2} \sum_{\ell=1}^{L+1} \hat{m}^{(\ell)}_n(\Delta z^{(\ell)}_n;\alpha^{(\ell)}_n)
\end{equation}
Thus, the coupling is removed.  Expanding each term:
\begin{equation}
    \frac{\alpha^{(\ell)}_n}{2} \left\lVert r_n + (\alpha^{(\ell)}_n)^{-1} H_n J^{(\ell)}_n \Delta z^{(\ell)}_n \right\rVert_{M_n}^2 \equiv \frac{\alpha^{(\ell)}_n}{2} \lVert r \rVert_{M_n}^2 + g^{(\ell)\top}_n \Delta z^{(\ell)}_n + \frac{1}{2\alpha^{(\ell)}_n} \Delta z^{(\ell)\top}_n G_n^{(\ell)} \Delta z^{(\ell)}_n \label{eq:expanded-local-loss-unregularised}
\end{equation}
Then, summing over $\ell$, using the fact that $\sum_\ell \alpha^{(\ell)} = 1$:
\begin{align}
    \hat{m}_n(\Delta z_n; \alpha_n) &= \frac{1}{2} \lVert r \rVert_{M_n}^2 + \sum_{\ell=1}^{L+1} \left[ g^{(\ell)\top}_n \Delta z^{(\ell)}_n + \frac{1}{2\alpha^{(\ell)}} \Delta z^{(\ell)\top}_n G_n^{(\ell)}(\lambda) \Delta z^{(\ell)}_n \right] \\
    &\geq m(\Delta z) 
\end{align}

\paragraph{Tangency in value}
\begin{equation}
     \hat{m}_n(0; \alpha_n) = \frac{1}{2} \sum_{\ell=1}^{L+1} \alpha^{(\ell)}_n \lVert r \rVert_{M_n}^2 = \frac{1}{2} \lVert r \rVert_{M_n}^2 = m(0) 
\end{equation}
\paragraph{Tangency in gradient}

Differentiating the $\ell$-th term of $\hat{m}_n(\Delta z_n; \alpha_n)$:
\begin{align}
     \nabla_{z^{(\ell)}_n} \hat{m}_n(\Delta z_n; \alpha_n) &= \alpha^{(\ell)}_n \cdot (\alpha^{(\ell)}_n)^{-1} J_n^{(\ell)\top} H_n^{\top} M_n \left( r_n + (\alpha_n^{(\ell)})^{-1} H_nJ_n^{(\ell)} \Delta z^{(\ell)}_n \right) \\
     &= g^{(\ell)}_n + \frac{1}{\alpha^{(\ell)}_n} G_n^{(\ell)} \Delta z^{(\ell)}_n
\end{align}
At the origin, this is simply:
\begin{equation}
    \nabla_{z^{(\ell)}_n} \hat{m}_n(0; \alpha_n) = g^{(\ell)}_n = \nabla_{z^{(\ell)}_n} m_n(0)
\end{equation}
\end{proof}

Now that the layers may be treated separately, we introduce the trust-region framework through a Levenberg-Marquardt penalty.  This leads to the following definition:

\begin{definition}[Regularised Layer-Local Subproblem]
\label{def:regularised-surrogate-objective}  Given the layer weight $\alpha^{(\ell)}_n$, Lagrange multiplier $\lambda$, and the block-diagonal metric $B_n$ of Theorem~\ref{thm:uniform-decoupling-property}, the layer-local subproblem is:
\begin{equation}
    Q^{(\ell)}_n(\Delta z^{(\ell)}_n; \lambda, \alpha^{(\ell)}_n) = \hat{m}^{(\ell)}_n(\Delta z^{(\ell)}_n;\alpha^{(\ell)}_n) + \tfrac{\lambda}{2} \Delta z^{(\ell)\top}_n B^{(\ell)}_n \Delta z^{(\ell)}_n 
\end{equation}
\end{definition}

By completing the square in the above definition, we may read off additional objects of note:
\begin{align}
    Q^{(\ell)}_n(\Delta z^{(\ell)}_n; \lambda, \alpha^{(\ell)}_n) &= \Phi^{(\ell)}_n(\lambda, \alpha^{(\ell)}_n) + \tfrac{1}{2}||\Delta z^{(\ell)}_n + \Delta z^{(\ell)\star}_n||^2_{\tilde{G}^{(\ell)}_n(\lambda, \alpha^{(\ell)}_n)} \\
\tilde{G}^{(\ell)}_n(\lambda, \alpha^{(\ell)}_n) &\coloneqq \frac{1}{\alpha^{(\ell)}_n}G^{(\ell)}_n + \lambda B^{(\ell)}_n &\quad \mathrm{Curvature}\\
\Delta z^{(\ell)\star}_n(\lambda, \alpha^{(\ell)}_n) &\coloneqq -[\tilde{G}^{(\ell)}_n(\lambda, \alpha^{(\ell)}_n)]^{-1}g^{(\ell)}_n &\quad \mathrm{Minimiser}  \label{eq:q-def-minimiser}
    \\ \Phi^{(\ell)}_n(\lambda, \alpha^{(\ell)}_n)&\coloneqq  \alpha^{(\ell)}_n ||r||^2_{M_n} - g^{(\ell)\top}_n [\tilde{G}^{(\ell)}_n(\lambda, \alpha^{(\ell)}_n)]^{-1}g^{(\ell)}_n &\quad \mathrm{Residual} \label{eq:q-def-residual}
\end{align}

\begin{remark}[Interpretation of the $\alpha^{(\ell)}_n$]: The layer weights enable us to distribute a single trust-region parameter $\lambda$ across the different layers and datapoints. 
\end{remark}

The residual $\Phi^{(\ell)}_n(\lambda, \alpha^{(\ell)}_n)$ holds particular relevance: it is error that remains unaccounted for by the minimiser $\Delta z^{(\ell)\star}_n(\lambda, \alpha^{(\ell)}_n)$, and so the optimal perturbations will be found by choosing $\alpha^{(\ell)}_n$ in the simplex that minimise $\Phi(\lambda, \alpha) = \sum_{\ell} \Phi^{(\ell)}_n(\lambda, \alpha^{(\ell)}_n)$.  Unfortunately, the relationship between $\alpha^{(\ell)}_n$ and $\Delta z^{(\ell)\star}_n(\lambda, \alpha^{(\ell)}_n$ is too complex to find a closed-form solution, and so we must resort to iterative methods.  To this end, we prove the following lemma:

\begin{lemma}[Convexity of the Regularised Layer-Local Subproblem]
\label{lem:convexity-of-Q}
For $\lambda > 0$, $B^{(\ell)}_n \succ 0$, and $G^{(\ell)}_n \succeq 0$,
$Q^{(\ell)}_n(\Delta z^{(\ell)}_n;\lambda,\alpha^{(\ell)}_n)$ is jointly convex in
$(\Delta z^{(\ell)}_n, \alpha^{(\ell)}_n)$ on $\mathbb{R}^K \times (0,\infty)$.
\end{lemma}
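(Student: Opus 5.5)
Expand $Q^{(\ell)}_n$ explicitly and reduce joint convexity to convexity of a perspective function. Using the expansion in \eqref{eq:expanded-local-loss-unregularised} (suppressing indices $n,\ell$ and writing $x = \Delta z^{(\ell)}_n$, $a = \alpha^{(\ell)}_n$), we have
\begin{equation*}
Q(x; \lambda, a) = \tfrac{a}{2}\lVert r\rVert_{M}^2 + g^{\top} x + \tfrac{1}{2a}\, x^{\top} G x + \tfrac{\lambda}{2}\, x^{\top} B x .
\end{equation*}
The first term is linear in $a$ and the second is linear in $x$, so both are jointly convex. The last term is a convex quadratic in $x$ (since $\lambda>0$ and $B\succ 0$) and is constant in $a$, so it is also jointly convex. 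Since a sum of convex functions is convex, everything reduces to showing that $\phi(x,a) = \tfrac{1}{2a} x^{\top} G x$ is jointly convex on $\mathbb{R}^K\times(0,\infty)$.

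For $\phi$, recognise it as the perspective of the convex quadratic $f(x) = \tfrac12 x^{\top} G x$, since $a f(x/a) = \tfrac{1}{2a}x^\top G x$. The perspective of a convex function is jointly convex on $a>0$ (standard, e.g.\ Boyd--Vandenberghe). To keep the argument self-contained, I would instead write $G = C^{\top}C$ with $C = M^{1/2} H J^{(\ell)}$, using $M\succeq 0$ as in Theorem~\ref{thm:majorization}. This gives $\phi(x,a) = \tfrac{1}{2}\lVert Cx\rVert^2/a$, which is the composition of the quadratic-over-linear function $(y,a)\mapsto \lVert y\rVert^2/a$ with the linear map $(x,a)\mapsto(Cx,a)$. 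Convexity of quadratic-over-linear can then be checked in one of two ways. One is the epigraph/Schur complement characterisation: $t \ge \lVert y\rVert^2/a$ with $a>0$ iff $\begin{pmatrix} a & y^{\top}\\ y & tI\end{pmatrix}\succeq 0$, which is a convex (LMI) set. The other is the explicit Hessian of $\lVert y\rVert^2/a$, namely $\tfrac{2}{a^3}\begin{pmatrix} a I \\ -y^{\top}\end{pmatrix}\begin{pmatrix} aI & -y\end{pmatrix}\succeq 0$. Convexity is preserved under affine precomposition, so $\phi$ is jointly convex.

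The only step with any content is this perspective/quadratic-over-linear argument; the rest is bookkeeping. A minor subtlety is that $G$ is only positive semidefinite, so the Hessian of $\phi$ in $(x,a)$ is singular and we obtain convexity but not strict joint convexity. Strict convexity in $x$ for each fixed $a$ comes from the $\lambda B$ term, and this is what the subsequent alternating-minimisation argument needs. I would also check that the residual expression \eqref{eq:q-def-residual}, which carries a factor $\alpha\lVert r\rVert^2$ rather than $\tfrac{\alpha}{2}\lVert r\rVert^2$, does not affect the argument. It does not, because that term is linear in $a$ either way.
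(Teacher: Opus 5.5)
Your proof is correct and takes the paper's route: the same split of $Q^{(\ell)}_n$ into an affine part, the term $\tfrac{1}{2\alpha}\Delta z^\top G\,\Delta z$, and the $\lambda B$ quadratic, with the only difference being that the paper certifies the middle term as a pointwise supremum of affine functions, $\sup_y\bigl(y^\top G^{1/2}\Delta z - \tfrac12\alpha\,y^\top y\bigr)$, where you use the equivalent perspective/Schur-complement (or Hessian) characterisation. Two small remarks: taking $C = G^{1/2}$ directly uses only the stated hypothesis $G\succeq 0$ and avoids routing through $M^{1/2}HJ^{(\ell)}$; and the Hessian of $\phi$ is singular even when $G\succ 0$, because the perspective is positively homogeneous and so $(x,a)$ is always a null direction, which means semidefiniteness of $G$ is not the reason strict joint convexity fails.
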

\begin{proof}
From \ref{eq:expanded-local-loss-unregularised}, $Q$ is a sum over three terms:
\begin{align}
    \mathrm{(i)} \quad &: \quad \frac{\alpha^{(\ell)}_n}{2} \lVert r \rVert_{M_n}^2 + g^{(\ell)\top}_n \Delta z^{(\ell)}_n\\
    \mathrm{(ii)} \quad &: \quad \frac{1}{2\alpha^{(\ell)}_n} \Delta z^{(\ell)\top}_n G_n^{(\ell)} \Delta z^{(\ell)}_n \\
    \mathrm{(iii)} \quad &: \quad   \frac{\lambda}{2} \Delta z^{(\ell)\top}_n B_n^{(\ell)} \Delta z^{(\ell)}_n
\end{align}

Each term is jointly convex in $(\Delta z^{(\ell)}_n, \alpha^{(\ell)}_n)$:
\begin{itemize}
  \item \textbf{(i)} is affine.
  \item \textbf{(ii)} For $\alpha^{(\ell)}_n > 0$,
        \begin{equation}
          \frac{1}{2\alpha^{(\ell)}_n}\, \Delta z^{(\ell)\top}_n G^{(\ell)}_n \Delta z^{(\ell)}_n
          = \sup_{y}\Bigl( y^\top (G^{(\ell)}_n)^{1/2} \Delta z^{(\ell)}_n
            - \tfrac{1}{2}\alpha^{(\ell)}_n\, y^\top y \Bigr),
        \end{equation}
        which is a pointwise supremum of functions affine in $(\Delta z^{(\ell)}_n, \alpha^{(\ell)}_n)$.
  \item \textbf{(iii)} is a convex quadratic in $\Delta z^{(\ell)}_n$ alone, since $B^{(\ell)}_n \succ 0$.
\end{itemize}
Thus, since the sum of jointly convex functions is jointly convex, $Q$ is jointly convex.
\end{proof}

The joint convexity of $Q^{(\ell)}_n(\Delta z^{(\ell)}_n;\lambda,\alpha^{(\ell)}_n)$ suggests that a simple alternating minimisation scheme can be used to find the optimal $\alpha_n$ and $\Delta z^{(\ell)}_n$.  The optimal $\Delta z^{(\ell)}_n$ for fixed $\alpha^{(\ell)}_n$ may be read off from the completed-square in \ref{eq:q-def-minimiser}.  For fixed $\Delta z^{(\ell)}_n$, we make use of the following Lemma:

\begin{lemma}[Optimal layer weights given fixed pre-activation perturbations]
\label{lem:optimal-alpha-from-deltaz}
The optimal $\alpha^{(\ell)}_n$ as a function of $\Delta z^{(\ell)}_n$ is given by:
\begin{equation}
\alpha^{(\ell)\star}_n(\Delta z_n) = \frac{\lVert \Delta z^{(\ell)}_n \rVert_{G^{(\ell)}_n}}{\sum_k \lVert \Delta z^{k}_n \rVert_{G^{(k)}_n}}
\end{equation}

\end{lemma}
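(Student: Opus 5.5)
For fixed $\Delta z_n$, minimise $\sum_\ell Q^{(\ell)}_n(\Delta z^{(\ell)}_n;\lambda,\alpha^{(\ell)}_n)$ over the open simplex in $\alpha_n$ and identify the minimiser from stationarity. In the expansion \eqref{eq:expanded-local-loss-unregularised}, the only $\alpha$-dependent parts are $\tfrac{\alpha^{(\ell)}_n}{2}\lVert r_n\rVert^2_{M_n}$ and $\tfrac{1}{2\alpha^{(\ell)}_n}\lVert\Delta z^{(\ell)}_n\rVert^2_{G^{(\ell)}_n}$. The penalty (iii) and the linear term in $g^{(\ell)}_n$ do not involve $\alpha$. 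Summed over $\ell$, the first part equals the constant $\tfrac12\lVert r_n\rVert^2_{M_n}$ on the simplex. So the problem reduces to
\begin{equation*}
\min_{\alpha_n\in\Delta^{\circ}} \; \sum_{\ell} \frac{c_\ell^2}{\alpha^{(\ell)}_n}, \qquad c_\ell \coloneqq \lVert \Delta z^{(\ell)}_n\rVert_{G^{(\ell)}_n}\ge 0 .
\end{equation*}

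This reduced objective is convex in $\alpha$ on the positive orthant, since each term $c^2/\alpha$ is convex there; this is also the $\alpha$-slice of Lemma~\ref{lem:convexity-of-Q}. Hence KKT stationarity is sufficient for optimality. Introduce a multiplier $\nu$ for $\sum_\ell\alpha^{(\ell)}_n=1$. Stationarity gives $-c_\ell^2/(\alpha^{(\ell)}_n)^2+\nu=0$, so $\alpha^{(\ell)}_n=c_\ell/\sqrt{\nu}$. Imposing the simplex constraint gives $\sqrt\nu=\sum_k c_k$, which yields the claimed formula.

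As an independent check, and to get the optimal value in closed form, apply Cauchy--Schwarz:
\begin{equation*}
\Big(\sum_\ell c_\ell\Big)^2 = \Big(\sum_\ell \tfrac{c_\ell}{\sqrt{\alpha_\ell}}\sqrt{\alpha_\ell}\Big)^2 \le \Big(\sum_\ell \tfrac{c_\ell^2}{\alpha_\ell}\Big)\Big(\sum_\ell\alpha_\ell\Big)=\sum_\ell \tfrac{c_\ell^2}{\alpha_\ell}.
\end{equation*}
Equality holds exactly when $\sqrt{\alpha_\ell}\propto c_\ell/\sqrt{\alpha_\ell}$, i.e.\ $\alpha_\ell\propto c_\ell$. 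This shows global optimality directly and gives the minimal value $(\sum_\ell c_\ell)^2$. When all $c_\ell>0$ the objective is strictly convex in $\alpha$, so the minimiser is unique.

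The main obstacle is the degenerate case, not the calculus. If some $c_\ell=0$, meaning $\Delta z^{(\ell)}_n$ lies in $\ker G^{(\ell)}_n$, the formula puts $\alpha^{(\ell)}_n$ on the boundary of the simplex, outside the open set on which the theorem works. I would handle this by stating the lemma as an infimum over the open simplex. That infimum is still $(\sum_k c_k)^2$ and is approached by pushing the weight on the zero-$c_\ell$ layers to zero, and the formula is its closed-simplex minimiser. If all $c_\ell=0$, every $\alpha$ is optimal and the ratio is undefined, so I would exclude this case or default to the uniform $\alpha$. I would also note that only this $\alpha$-step is needed for step (B) of Theorem~\ref{thm:majorization}: step (A) is the completed-square minimiser \eqref{eq:q-def-minimiser}, and joint convexity (Lemma~\ref{lem:convexity-of-Q}) then supports convergence of the alternation.
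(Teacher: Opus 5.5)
Your proof is correct and is essentially the paper's argument: a Lagrange multiplier for $\sum_\ell \alpha^{(\ell)}_n = 1$, stationarity giving $\alpha^{(\ell)}_n \propto \lVert \Delta z^{(\ell)}_n \rVert_{G^{(\ell)}_n}$, and normalisation to fix the multiplier. Your version is more careful in three places. You note that the $\tfrac{\alpha^{(\ell)}_n}{2}\lVert r_n\rVert^2_{M_n}$ terms sum to a constant on the simplex; the paper's derivative silently drops this term, which is harmless only because it shifts the multiplier. Your convexity and Cauchy--Schwarz arguments supply the sufficiency and uniqueness the paper leaves implicit. Your treatment of the $c_\ell = 0$ boundary cases handles the well-posedness the paper omits.
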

\begin{proof}
Let $\mu$ be a Lagrange multiplier enforcing $\sum_{\ell} \alpha_n^{(\ell)} = 1$, and consider the Lagrangian:
    \begin{equation}
        L_{\alpha} = Q^{(\ell)}_n(\Delta z^{(\ell)}_n; \lambda, \alpha^{(\ell)}_n) + \mu \bigg(\sum_{\ell} \alpha_n^{(\ell)} - 1\bigg)
    \end{equation}
Taking the derivative with respect to $\alpha_n^{(\ell)}$, we have:
\begin{equation}
    \frac{\partial L_{\alpha}}{\partial \alpha_n^{(\ell)}} = -\frac{1}{2(\alpha^{(\ell)}_n)^2} \Delta z^{(\ell)\top}_n G_n^{(\ell)} \Delta z^{(\ell)}_n + \mu = 0
\end{equation}
Thus:
\begin{equation}
\alpha^{(\ell)}_n = \frac{1}{\sqrt{2\mu}} ||\Delta z^{(\ell)}_n||_{G_n^{(\ell)} }
\end{equation}
Now, performing the sum over $\ell$:
\begin{align}
    \frac{1}{\sqrt{2\mu}} \sum_{\ell}||\Delta z^{(\ell)}_n||_{G_n^{(\ell)} } &= 1 \\
    \implies \sqrt{2\mu} &= \sum_{\ell}||\Delta z^{(\ell)}_n||_{G_n^{(\ell)} }
\end{align}
Hence, finally:
\begin{equation}
    \alpha^{(\ell)}_n(\Delta z) = \frac{||\Delta z^{(\ell)}_n||_{G_n^{(\ell)}}}{\sum_k ||\Delta z^{(k)}_n||_{G_n^{(k)}}}
\end{equation}
\end{proof}

We now prove that alternating minimisation can solve the subproblem:

\begin{lemma}[Convergence of Alternating Minimisation]
\label{lem:convergence-alternating-minimisation}
The sequence below converges to the minimum of $Q_n(\Delta z_n; \lambda, \alpha_n)$:
\begin{equation}
    \text{(A)}\quad \Delta z^{(\ell)}_n \leftarrow -\alpha^{(\ell)}_n \big( G^{(\ell)}_n + \alpha^{(\ell)}_n \lambda B^{(\ell)} \big)^{-1} g^{(\ell)}_n, \qquad \text{(B)}\quad \alpha^{(\ell)}_n \leftarrow \frac{\lVert \Delta z^{(\ell)}_n \rVert_{G^{(\ell)}_n}}{\sum_k\lVert \Delta z^{(k)}_n \rVert_{G^{(k)}_n}}
\end{equation}
\end{lemma}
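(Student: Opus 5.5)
The plan is to treat the iteration as block-coordinate descent on the jointly convex function
\[
\Psi_n(\Delta z_n,\alpha_n)=\sum_\ell Q^{(\ell)}_n(\Delta z^{(\ell)}_n;\lambda,\alpha^{(\ell)}_n),
\]
taken over $\mathbb{R}^{\sum K_\ell}\times\operatorname{relint}(\Delta)$, where $\Delta$ is the simplex. Joint convexity of $\Psi_n$ follows from Lemma~\ref{lem:convexity-of-Q}, because the sum of jointly convex functions is jointly convex and the simplex is a convex set.

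\textbf{Step 1: each update is an exact block minimiser.} For fixed $\alpha_n$, the minimiser of $Q^{(\ell)}_n$ in $\Delta z^{(\ell)}_n$ is read off from \eqref{eq:q-def-minimiser}: $-(\tfrac{1}{\alpha}G+\lambda B)^{-1}g$. Factoring out $\tfrac{1}{\alpha}$ gives exactly update (A). For fixed $\Delta z_n$, Lemma~\ref{lem:optimal-alpha-from-deltaz} gives the minimiser over the simplex, which is exactly (B). It remains to confirm that this stationary point is a minimum. The objective in $\alpha$ is $\sum_\ell\big(\tfrac{\alpha^{(\ell)}}{2}\|r\|^2_{M}+\tfrac{c_\ell}{2\alpha^{(\ell)}}\big)$. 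On the simplex the linear part is constant, and $\sum_\ell c_\ell/\alpha^{(\ell)}$ is strictly convex. So the KKT point is the unique minimiser whenever every $c_\ell=\|\Delta z^{(\ell)}\|^2_{G^{(\ell)}}>0$. Strict convexity in $\Delta z$ comes from the term $\lambda B\succ0$, so both block minimisers are unique.

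\textbf{Step 2: monotone decrease, boundedness, and limit points.} Since each half-step minimises exactly, $\Psi_n$ is nonincreasing along the iterates. The function is bounded below, because $\hat m_n\ge m_n\ge0$ by Lemma~\ref{lem:majoriser-on-lm}. Hence the values converge. For compactness, take the $\Delta z$-iterates. By (A), $\|\Delta z^{(\ell)}\|\le\|(\tfrac1\alpha G+\lambda B)^{-1}\|\,\|g\|\le\|g\|/(\lambda\lambda_{\min}(B))$, so these iterates are bounded. The $\alpha$-iterates lie in the compact simplex. Therefore limit points exist.

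Next I would use the standard result for two-block exact minimisation: when each block subproblem has a unique minimiser, every limit point is a block-wise minimiser (Bertsekas; Grippo--Sciandrone). At such a point both partial gradients vanish, or, for $\alpha$, satisfy the simplex KKT conditions. For a convex function that is differentiable in the interior, block-wise stationarity plus the constraint structure gives global stationarity, and hence global optimality by convexity. If the limit is unique, the whole sequence converges. Uniqueness of the limit would follow from strict joint convexity; failing that, value convergence to the minimum is what the statement needs.

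\textbf{Main obstacle: the simplex boundary.} The function is not defined at $\alpha^{(\ell)}=0$, and (B) gives $\alpha^{(\ell)}=0$ whenever $\|\Delta z^{(\ell)}\|_{G^{(\ell)}}=0$. To keep the iterates, and their limits, in the open simplex, I would proceed as follows.
\begin{enumerate}
\item Assume $g^{(\ell)}_n\neq0$ for every layer $\ell$; otherwise drop that layer, since its optimal perturbation is $0$.
\item Show that (A) then yields $\|\Delta z^{(\ell)}\|_{G}\ge c>0$ uniformly over $\alpha$ in the simplex. The argument is that $\Delta z^{(\ell)}=-(\tfrac1\alpha G+\lambda B)^{-1}g$ with $g\in\operatorname{range}(J^\top H^\top M^{1/2})$, so $g$ has a nonzero component that $G$ sees.
\item Bound the numerator of (B) below by $c$ and the denominator above using the bounded iterates. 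This keeps every $\alpha^{(\ell)}$ at least some $\epsilon>0$.
\end{enumerate}
With the iterates confined to a compact set where $\Psi_n$ is smooth, the limit-point argument of Step 2 goes through.
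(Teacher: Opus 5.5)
\textbf{The gap is item 2 of your boundary argument, and it cannot be repaired under your hypotheses.} Put $h_\ell(a)=g^{(\ell)\top}_n\big(\tfrac1a G^{(\ell)}_n+\lambda B^{(\ell)}_n\big)^{-1}g^{(\ell)}_n$. Update (A) at $a=\alpha^{(\ell)}_n$ gives $\lVert\Delta z^{(\ell)}_n\rVert_{G^{(\ell)}_n}=a\sqrt{h_\ell'(a)}$. As $a\to0$, $h_\ell'(a)\to\lVert P_\ell M_n^{1/2}r_n\rVert^2$, where $P_\ell$ is the orthogonal projector onto $\operatorname{range}(M_n^{1/2}H_nJ^{(\ell)}_n)$. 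So this norm vanishes linearly as $\alpha^{(\ell)}_n\to0$. Your range argument shows it is positive for each fixed $\alpha^{(\ell)}_n>0$, not that it is bounded below on the simplex.

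Step (B) multiplies $\alpha^{(\ell)}_n$ by $\sqrt{h_\ell'(\alpha^{(\ell)}_n)}/\sum_k\alpha^{(k)}_n\sqrt{h_k'(\alpha^{(k)}_n)}$. The numerator of this factor tends to $\lVert P_\ell M_n^{1/2}r_n\rVert$, and nothing you assume makes the factor exceed one. It can be smaller. Take $H_n=M_n=I_2$, $B=I$, $r_n=(1,1)^\top$, the identity block $J^{(2)}=I_2$ and a hidden block $J^{(1)}=e_1$, so both $g^{(\ell)}_n\neq0$. Write $\alpha_1,\alpha_2$ for the two weights.
\begin{itemize}
\item Then $h_1(a)=a/(1+\lambda a)$ and $h_2(a)=2a/(1+\lambda a)$.
\item Eliminating $\Delta z_n$ leaves $\tfrac12\lVert r_n\rVert^2-\tfrac12\big[h_1(\alpha_1)+h_2(1-\alpha_1)\big]$.
\item For $\lambda<\sqrt2-1$ one has $h_1'(\alpha_1)\le1<2/(1+\lambda)^2\le h_2'(1-\alpha_1)$ on all of $[0,1]$.
\end{itemize}
Hence the infimum of $Q_n$ is approached only as $\alpha_1\to0$ and is not attained on the open simplex. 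Near the boundary, (B) acts as $\alpha_1\mapsto\alpha_1(1+\lambda)/\sqrt2+O(\alpha_1^2)$. The iterates cannot remain in any set $\{\alpha\ge\epsilon\}$, since otherwise your Step 2 would produce an interior minimiser. So the confinement that your appeal to Bertsekas and Grippo--Sciandrone needs fails in general; those results assume a $C^1$ function on a closed product set.

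\textbf{What is missing is a nondegeneracy hypothesis in place of $g^{(\ell)}_n\neq0$.} For instance, require $M_n^{1/2}r_n\in\operatorname{range}(M_n^{1/2}H_nJ^{(\ell)}_n)$ for every $\ell$. This holds when $H_nJ^{(\ell)}_n$ is surjective and $M_n\succ0$, the generic case $K\gg C$. Under it:
\begin{itemize}
\item Every limit $h_\ell'(0)$ equals $\lVert r_n\rVert_{M_n}^2$.
\item $h_k$ is strictly concave when $g^{(k)}_n\neq0$, so $h_k'(a)<\lVert r_n\rVert_{M_n}^2$ for $a>0$, and $h_k'$ is decreasing.
\item Some weight is at least the reciprocal of the number of blocks, so the denominator above is at most $\lVert r_n\rVert_{M_n}-\delta$ for a fixed $\delta>0$.
\item Therefore (B) strictly enlarges every sufficiently small weight.
\end{itemize}
This self-correcting bound, not a uniform lower bound on $\lVert\Delta z^{(\ell)}_n\rVert_{G^{(\ell)}_n}$, keeps the iterates in a compact interior set. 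After that, your limit-point argument is sound: continuity of (A) and (B), uniqueness of the block minimisers, then convexity.

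That argument already goes beyond the paper. Its proof stops at monotonicity and boundedness below, which gives convergence of the values but does not identify the limit as the minimum. The example shows the statement needs an extra hypothesis of this kind, or a reformulation on the closed simplex, either way.
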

\begin{proof}
Step (A) is the exact minimiser of $Q_n(\Delta z_n; \lambda, \alpha_n)$: over $\Delta z_n$ for fixed $\alpha_n$ . Step (B) is the exact minimizer of $Q_n(\Delta z_n; \lambda, \alpha_n)$: over $\alpha_n$ in the simplex for fixed $\Delta z_n$. Denoting $\Delta z_n(t), \alpha_n(t)$ the $t$-th iterates, the algorithm is monotonically non-increasing.
\begin{equation}
    Q_n(\Delta z_n(t+1); \lambda, \alpha_n(t+1)) \le Q_n(\Delta z_n(t+1); \lambda, \alpha_n(t)) \leq Q_n(\Delta z_n(t); \lambda, \alpha_n(t))
\end{equation}

The sequence is bounded below by $\min_{\Delta z_n} m(\Delta z_n) \ge 0$, and so it converges.
\end{proof}

\subsection{Proof of Theorem~\ref{thm:inscribed}: Single-parameter coupling of the two trust regions}\label{sec:proof-thm-inscribed}

Throughout, $\langle X, Y \rangle = \operatorname{tr}(X^{\top} Y)$, $u_n^{\top}$
are the rows of $\tilde U$, and we write $\Delta z_n = \Delta W^{\top} u_n$ for
the pre-activation change induced on datapoint $n$.

\emph{Inscription.} Both $q_z$ and $q_w$ are nonnegative. If
$q_z(\Delta W) + q_w(\Delta W) \le 1$, then $q_z(\Delta W) \le 1 - q_w(\Delta W)
\le 1$, and likewise $q_w(\Delta W) \le 1$, so
$\{ q_z + q_w \le 1 \} \subseteq \{ q_z \le 1 \} \cap \{ q_w \le 1 \}$.
Minimising the local objective over the inscribed ellipsoid is a convex problem
in which $\Delta W = 0$ is strictly feasible; by Slater's condition strong
duality holds, and its Lagrangian with multiplier $\lambda / 2$ is, up to a
constant, the penalised objective $L_\lambda$.

\emph{The penalty.} By the definitions in \eqref{eq:mu-from-lambda} and
\eqref{eq:two-forms},
\begin{equation}
\frac{\lambda}{2} \big( q_z + q_w \big)
= \frac{\mu_z}{2} \lVert \tilde U \Delta W \rVert_{F}^{2}
+ \frac{\mu_J}{2} \lVert \Delta W \bar{\tilde G}^{1/2} \rVert_{F}^{2},
\end{equation}
which gives the first line of \eqref{eq:single-lambda}.

\emph{Completing the square.} Let $g_n = J_n^{\top} H_n^{\top} M_n r_n$. Expanding the
$n$-th summand and adding its share $\frac{\mu_z}{2} \lVert \Delta z_n
\rVert^{2}$ of the pre-activation penalty,
\begin{align}
\frac{1}{2} \Big\lVert \sqrt{\alpha_n} \, r_n + \frac{1}{\sqrt{\alpha_n}} J_n
\Delta z_n \Big\rVert_{M_n}^{2} + \frac{\mu_z}{2} \lVert \Delta z_n \rVert^{2}
&= \frac{\alpha_n}{2} r_n^{\top} M_n r_n + g_n^{\top} \Delta z_n
+ \frac{1}{2} \Delta z_n^{\top} \tilde G_n \Delta z_n ,
\end{align}
since the cross term carries the factor $2 \sqrt{\alpha_n} \alpha_n^{-1/2} = 2$.
This quadratic is minimised at $\Delta z_n^{\star} = -\tilde G_n^{-1} g_n$, and
because $\tilde G_n \Delta z_n^{\star} = -g_n$,
\begin{equation}
g_n^{\top} \Delta z_n + \frac{1}{2} \Delta z_n^{\top} \tilde G_n \Delta z_n
= \frac{1}{2} \big( \Delta z_n - \Delta z_n^{\star} \big)^{\top} \tilde G_n
\big( \Delta z_n - \Delta z_n^{\star} \big)
- \frac{1}{2} \Delta z_n^{\star\top} \tilde G_n \Delta z_n^{\star} .
\end{equation}
Summing over $n$ and collecting the terms independent of $\Delta W$ into the
constant gives the second line of \eqref{eq:single-lambda}.

\emph{Stationarity.} Write $e_n = \Delta W^{\top} u_n - \Delta z_n^{\star}$.
Perturbing $\Delta W \to \Delta W + V$ changes $e_n$ by $V^{\top} u_n$, so the
first-order change in $\frac{1}{2} \sum_n e_n^{\top} \tilde G_n e_n$ is
$\langle V, \sum_n u_n e_n^{\top} \tilde G_n \rangle$. By symmetry of
$\bar{\tilde G}$, the first-order change in
$\frac{\mu_J}{2} \operatorname{tr}(\Delta W^{\top} \Delta W \bar{\tilde G})$ is
$\langle V, \mu_J \Delta W \bar{\tilde G} \rangle$. Setting the gradient to zero,
\begin{equation}
\sum_n u_n u_n^{\top} \Delta W \tilde G_n + \mu_J \Delta W \bar{\tilde G}
= \sum_n u_n \Delta z_n^{\star\top} \tilde G_n = -\sum_n u_n g_n^{\top},
\end{equation}
where the last equality uses $\tilde G_n \Delta z_n^{\star} = -g_n$ and the
symmetry of $\tilde G_n$. This is \eqref{eq:reduced-system-reg}.

\emph{Uniqueness.} For any $V$,
\begin{equation}
\langle V, \mathcal{G}_{\lambda}(V) \rangle
= \sum_n (V^{\top} u_n)^{\top} \tilde G_n (V^{\top} u_n)
+ \mu_J \operatorname{tr}(V^{\top} V \bar{\tilde G})
\ge \mu_J \mu_z \lVert V \rVert_{F}^{2},
\end{equation}
since $\tilde G_n \succeq 0$ and $\bar{\tilde G} = \bar G + \mu_z I \succeq
\mu_z I$. The same expression is symmetric in its two arguments, so
$\mathcal{G}_{\lambda}$ is symmetric positive definite in the Frobenius inner
product. Hence $L_\lambda$ is strictly convex, and its unique minimiser is the
unique solution of \eqref{eq:reduced-system-reg}.

\subsection{Proof of Corrollary~\ref{cor:kfac}: Exact recovery of KFAC}

Substituting $\tilde G_n = \bar{\tilde G}$ into \eqref{eq:reduced-system-reg},
\begin{equation}
\mathcal{G}_{\lambda}(V) = \Big( \sum_n u_n u_n^{\top} \Big) V \bar{\tilde G}
+ \mu_J V \bar{\tilde G} = \big( A + \mu_J I \big) V \bar{\tilde G}.
\end{equation}
Both factors are symmetric positive definite, so the system
$(A + \mu_J I) \Delta W \bar{\tilde G} = -\nabla_W L$ is solved by
left-multiplying by $(A + \mu_J I)^{-1}$ and right-multiplying by
$\bar{\tilde G}^{-1}$. The right-hand side is unchanged by the substitution, since
it is the true gradient for any choice of per-sample metric.

\section{Additional Experimental Details}\label{appendix:more-exp-details}

\subsection{Settings}

In this section, we recount additional experimental details that were not described in the main body of the paper.  Our models are implemented using PyTorch \citep{pytorch}, and trained using NVIDIA RTX A6000 48GB GPUs.

\paragraph{General Training Details} All second order methods were trained over 15 epochs, and all first-order methods (including the TP variants from the literature) were given 30 epochs.  We reserve 10\% of the training set as a validation set.  The test errors are reported at the epoch with the best validation error.  In all experiments, we denote the batch-size as fractions (so that a batch-size of 0.1 translates to 10\% of the training dataset).  Hyperparameters are tuned via Bayesian optimization on \texttt{wandb} against validation accuracy, with three seeds per configuration; we report the test error of the configuration with the best validation accuracy.  

\paragraph{Trust Region Settings} The exact trust region algorithm is given as a listing in Algorithm~\ref{alg:tr}, where \textsc{Solve} is replaced by the strategy specified by KFAC, GDTP, EucTP, CGTP (no WD), or CGTP.  We use the common values that are default in standard packages \citep{2020SciPy-NMeth}: a stringency ($\eta_{\text{acc}}$) of 0.15, acceptance/success ($\eta_{\text{lo}}/\eta_{\text{high}}$) thresholds of 0.25 and 0.75, a growth factor $\gamma_{\uparrow}$ of 2 and a shrink factor $\gamma_{\downarrow}$ of 0.25.  The regularization strength is clipped to lie within 1e-5 and 1024.  

\paragraph{Subproblem norm} Each of the trust regions methods take, as subproblem norm, $M_n = H_n^{-1}$.  For softmax cross-entropy classification problems (like those considered in the experiments), $H_n$ is not invertible.  We combat this by adding a small, constant penalty on the magnitude of the model outputs, set to $\sigma = 0.001$, which amounts to Tikhonov regularisation $H_n \to H_n + \sigma I$.

\paragraph{Preconditioned Conjugate Gradient} We perform an ablation study (Table~\ref{tab:simplex-scheme-ablation}) incorporating preconditioners into our CG algorithm.  Preconditioners are desirable because they can enable faster convergence, enabling us to get competitive results with a smaller number in cg iterations.  We precondition with the averaged operator $\mathcal{P} = (A+\mu_JI) \otimes \bar{\tilde G}$ rather than a per-sample alternative. Beyond its Kronecker structure, this choice is the one consistent with the null-space elimination of Section~\ref{sec:cg-alg} as it commutes with with the constraint operator.  The general preconditioned CG algorithm is shared in Algorithm~\ref{alg:cg}. Ultimately, however, we find that using preconditioners results in worse performance, and the gap is related to the choice of simplex scheme (Supp~\ref{sec:simple-simplex-schemes}).  

\paragraph{Custom KFAC implementation} KFAC here is our own implementation.  Per Corrollary~\ref{cor:kfac}, KFAC is exactly recovered from CGTP when the per-sample Jacobians are replaced by their batch mean.  To ensure a fair comparison, we use the same trust-region adapation as CGTP rather than periodic-inversion schedule of \citep{Martens2015OptimizingNN}. Likewise, we invert every step. This is more expensive per-step than a scheduled inversion but negligible relative to a network backward pass. We use a uniform simplex scheme $\alpha = L^{-1}$ to decouple the layer subproblems.

\subsection{Compute Requirements}

\begin{table}[htbp]
\centering
\begin{tabular}{lcccc}
\hline
\textbf{Method} & \textbf{Batch size} & \textbf{Peak VRAM (mb)} & \textbf{FLOPs / step} & \textbf{Total Wall Clock (10 epochs)} \\
\hline
BP+ADAM    & 0.01 & 103.49  & 1.08 GFLOP & 6m 11s \\
KFAC    & 0.1 & 660.40  & 93.8 GFLOP & 6m 24s \\
CGTP & 0.1  & 3210.10 & 1.16 TFLOP & 6m 47s \\
\hline
\end{tabular}
\caption{Resource comparison between BP+ADAM and CGTP}
\label{tab:resource-comparison}
\end{table}

While we emphasise that our implementation is an unoptimised proof-of-concept, we include runtime statistics in Table~\ref{tab:resource-comparison}, computed over a profiling run (10 epochs, batch sizes as listed) to measure per-step cost.  Per step, CGTP uses roughly 12x the FLOPs of KFAC and 1000x those of Adam, because each outer iteration runs up to 100 CG iterations. These inner solves are independent across layers, so in a layer-parallel implementation they need not add to wall-clock time in proportion to depth; our implementation solves layers sequentially.  Finally, other factors already favour our approach, e.g. reduced host-device data movement, since a batch is loaded to the GPU once and reused across all inner-loop iterations.

\subsection{Simplex Schemes}\label{sec:simple-simplex-schemes}

Theorem~\ref{thm:majorization} describes how the multi-layer subproblem may be decomposed into non-interacting parts, and how alternating minimisation to find the optimal set of layer weights that maximises the expected decrease.  Unfortunately, while updating $\Delta z_n$ for fixed $\alpha_n$ can be done locally, updating the $\alpha_n$ requires a global syncronisation step across layers.  To this end, we perform an aggressively truncated alternating minimisation sequence: we initialise $\Delta z_n$ according to one of the strategies shared below, and then use this to take one $\alpha$-step.  These are the alphas that then enter into our CG solver, and an ablation is shown in Table~\ref{tab:simplex-scheme-ablation}.

The strategies are as follows:
\begin{itemize}
    \item \textbf{Uniform}: No $\alpha$-step.  We simply initialise $\alpha^{(\ell)}_n = (L+1)^{-1}$.
    \item \textbf{Unit} $\Delta z$: We consider $||\Delta z^{(\ell)}_n|| = 1$, and approximate $||\Delta z^{(\ell)}_n||_{G^{(\ell)}_n} \approx ||G^{(\ell)}_n||_F$.
    \item \textbf{Large} $\lambda$: In the limit of large $\lambda$, the regulariser $B^{(\ell)}_n$ dominates, so that $\Delta z^{(\ell)}_n \approx -\tfrac{1}{\lambda}[B^{(\ell)}_n]^{-1}g^{(\ell)}_n$.
\end{itemize}

\begin{table}[h]
    \centering
    
    \caption{Numerical comparison of the test error and number of iterations to reach a given test error for the different simplex schemes. Hyperparameters for each algorithm were optimised for validation error.}
\begin{tabular}{rlll}
\toprule
Preconditionr & Simplex Scheme & Test Score & milestone@0.13 \\
\midrule
False & Large $\lambda$ & 0.1085 ± 0.0016 & 6.6667 ± 0.5774 \\
False & Uniform & 0.1126 ± 0.0019 & 9.6667 ± 2.0817 \\
False & Unit $||\Delta z||$ & 0.1107 ± 0.0008 & 12.6667 ± 2.0817 \\
True & Large $\lambda$ & 0.1108 ± 0.0014 & 7.6667 ± 0.5774 \\
True & Uniform & 0.1240 ± 0.0027 & 9.0000 ± 1.7321 \\
True & Unit $||\Delta z||$ & 0.1188 ± 0.0032 & 7.3333 ± 1.5275 \\
\bottomrule
\end{tabular}
    \label{tab:simplex-scheme-ablation}
\end{table}
\FloatBarrier   

\newpage
\clearpage
\subsection{Ablation Studies}

\noindent\textit{This subsection contains:}
\begin{itemize}\itemsep0pt
    \item Figure~\ref{fig:ablation-batch-size}: compares test error between ADAM, KFAC, and CGTP at equal batch size.
    \item Figure~\ref{fig:cg_iters_batch_size}: grid over batch size and max CG iterations, validation error (MNIST). Batch size matters most.
  \item Figure~\ref{fig:depth-ablation}: test error against network depth (Fashion-MNIST), supporting the stability-across-depth claim.
  \item Figure~\ref{fig:vary-batch-size}: training and test curves for varying batch size, max CG iterations fixed at 100 (Fashion-MNIST).
  \item Figure~\ref{fig:vary-max-iter}: training and test curves for varying max CG iterations at batch size 0.1 (Fashion-MNIST). Small training gains, little effect on test error once large enough batch-size reached.
\end{itemize}

\begin{figure}[H]
    \centering
    \includegraphics[width=0.85\linewidth]{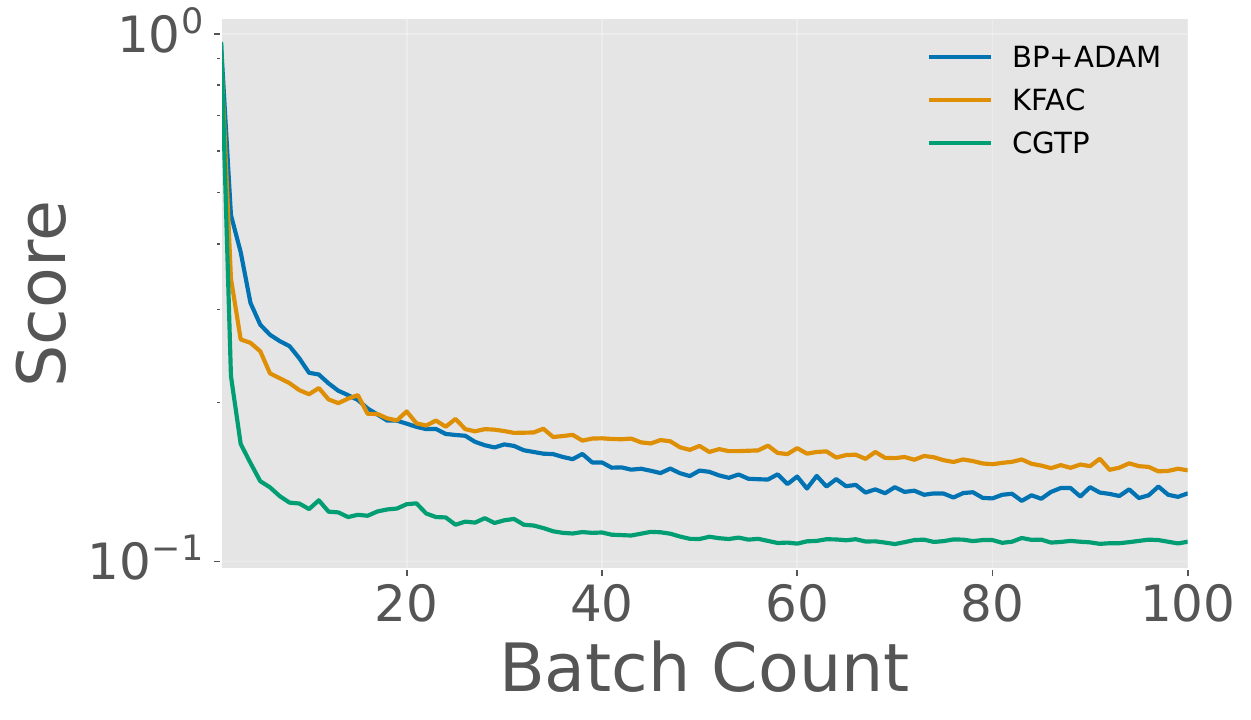}
    \caption{Comparing convergence of different optimisers at batch size 0.1 (Fashion-MNIST).  CGTP hits the 0.13 score milestone after 7 batches.  BP hits it at 50 batches, and KFAC does not reach it.}
    \label{fig:ablation-batch-size}
\end{figure}

\begin{figure}[htbp]
     \centering
     \includegraphics[width=0.8\textwidth]{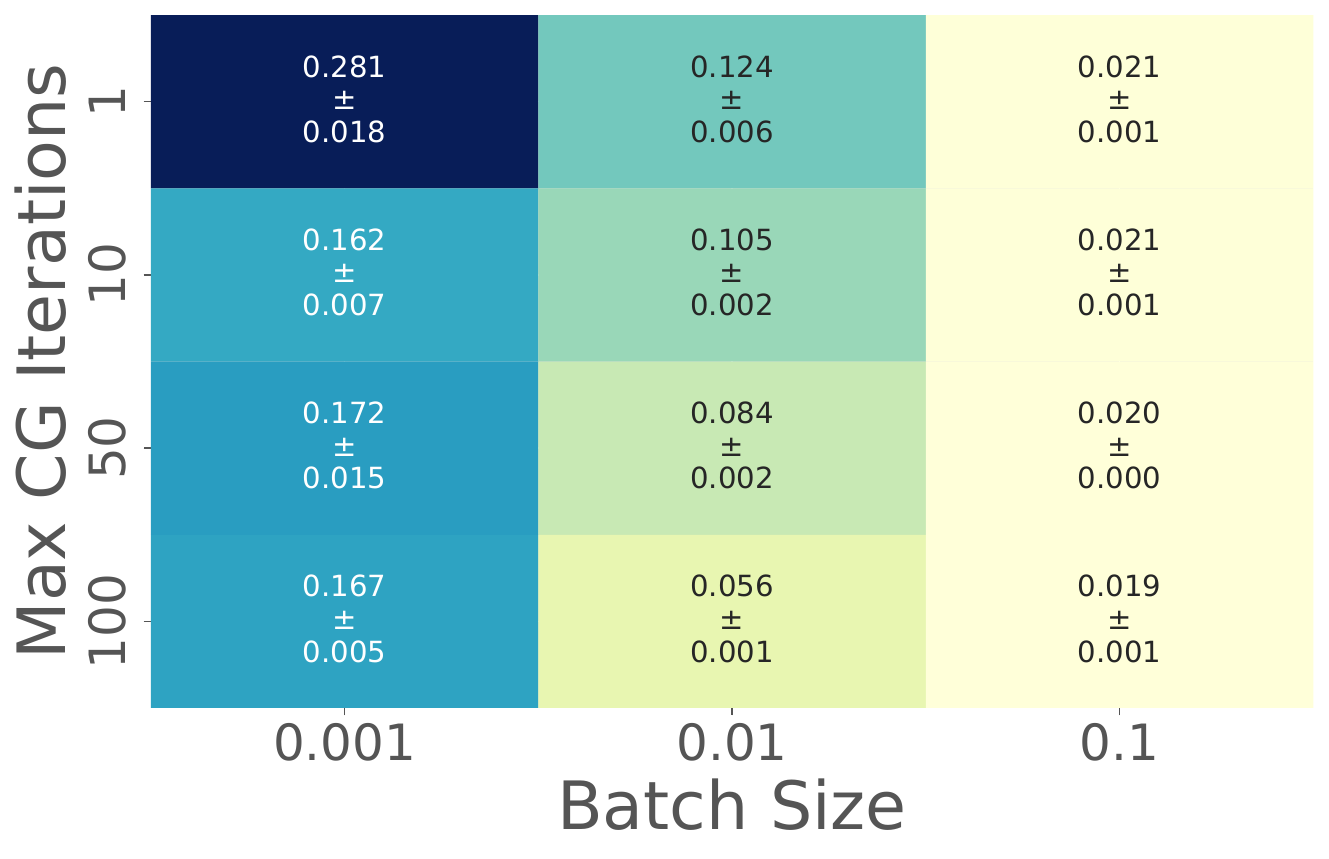}

         \label{fig:right}
         \caption{Results of a grid search over max CG iterations and batch size on validation error (MNIST dataset).  By far the batch size has the biggest impact on model performance, due to its importance in estimating curvature (training curves may be seen in Figure\ref{fig:vary-batch-size}).  However, even with a poor curvature estimate, increasing the max cg iter can improve performance - though its impact is diminished once sufficient batch size is reached.}
     \label{fig:cg_iters_batch_size}
\end{figure}

\begin{figure}[htbp]
     \centering

     \includegraphics[width=0.5\textwidth]{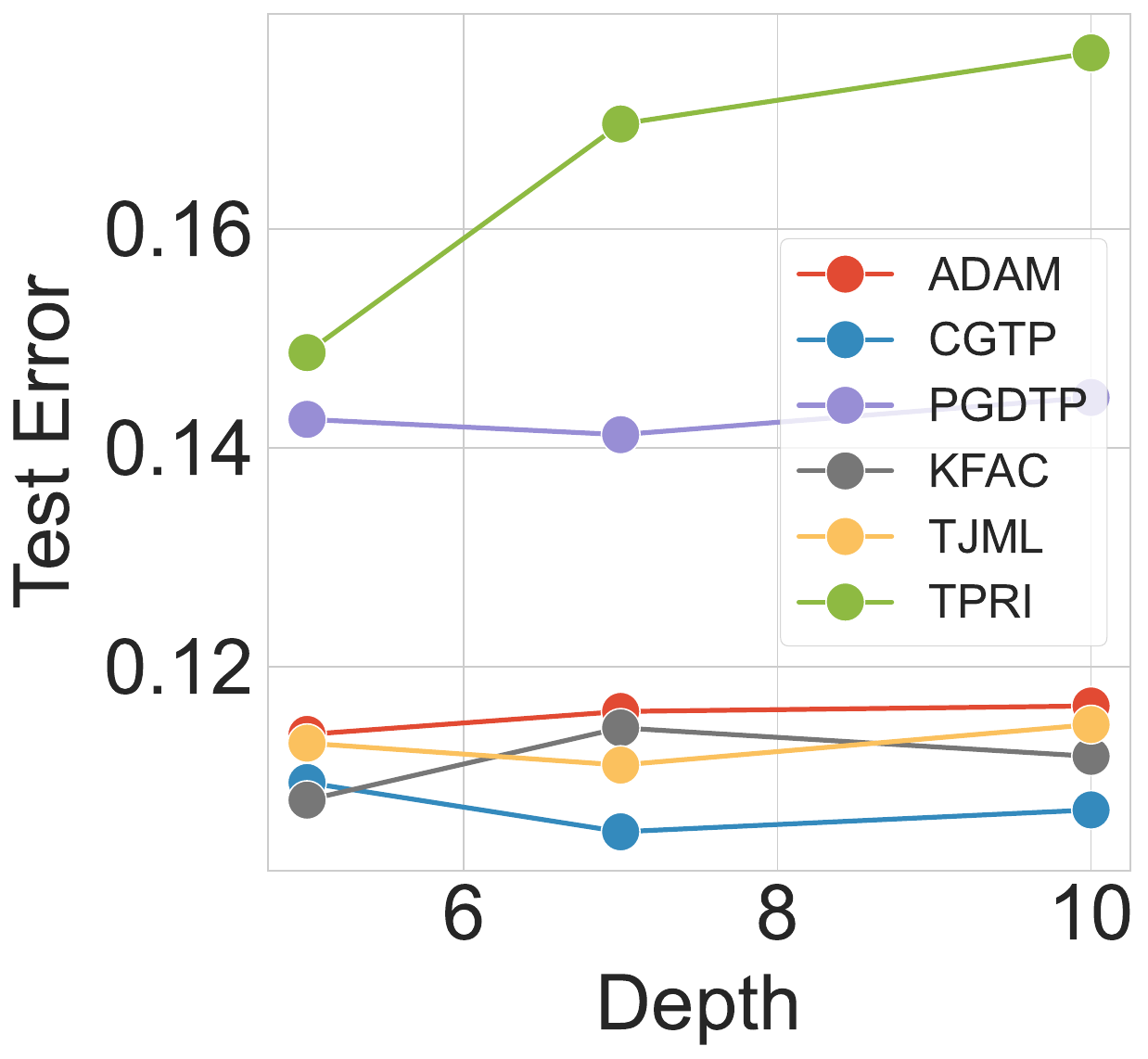}

     \caption{A graph of the impact of network depth on test error (Fashion-MNIST). In comparison with methods such as TPRI, CGTP is stable across depth.  This is likely due to the difference between the regularised composite Jacobian, and the product of regularised layer Jacobians.}
     \label{fig:depth-ablation}
\end{figure}

\begin{figure}
    \centering
    \includegraphics[width=0.95\linewidth]{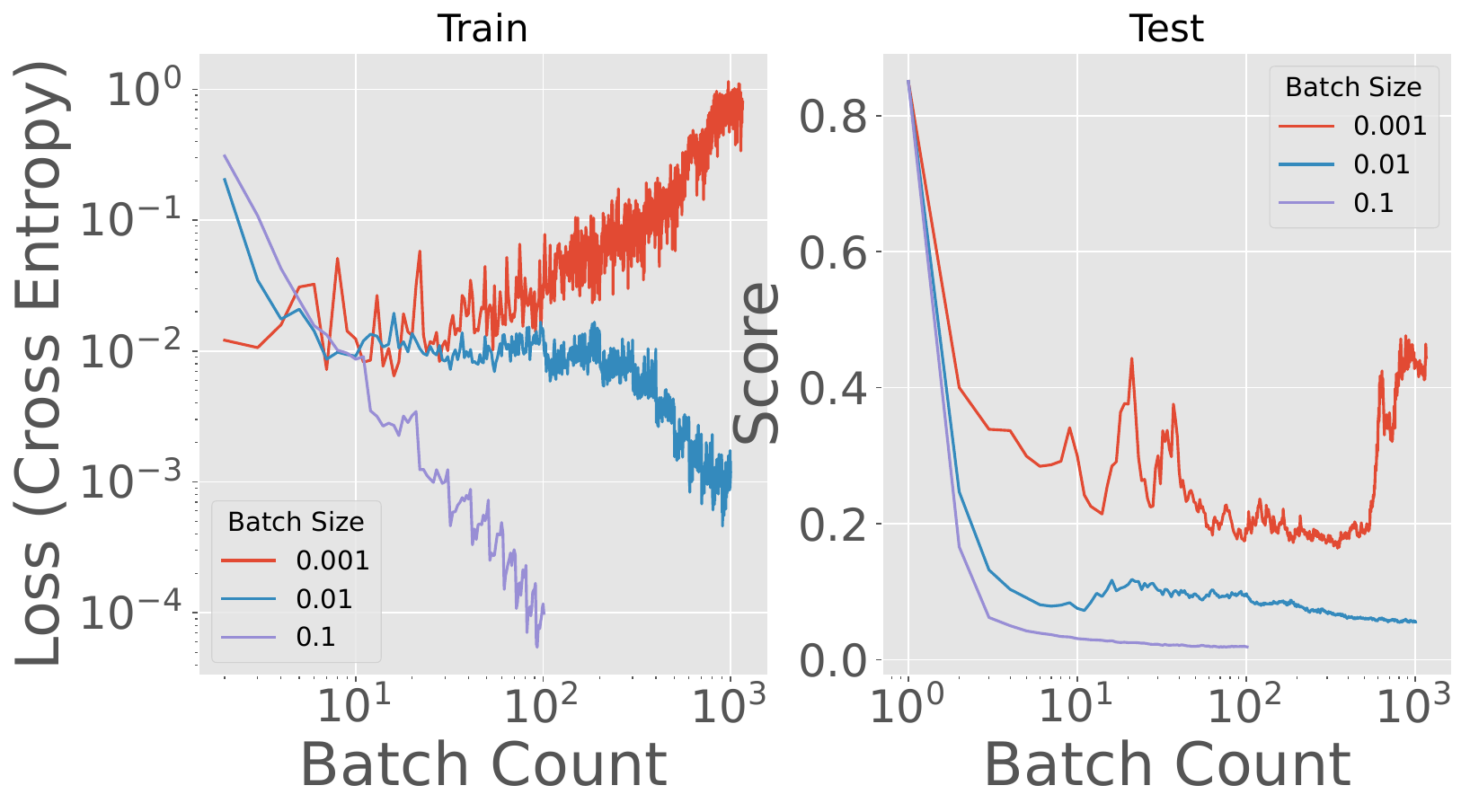}
    \caption{Varying the batch size at fixed max cg iter (Fashion MNIST): We keep the max cg iter fixed at 100, and vary the batch size for CGTP.  Larger batches result drastically accelerated learning, which is also reflected in the test error (with diminishing returns).  This highlights a known phenomenon of second order optimizers, as they need larger batch sizes to get better second order estimates of the curvature.}
    \label{fig:vary-batch-size}
\end{figure}

\begin{figure}
    \centering
    \includegraphics[width=0.95\linewidth]{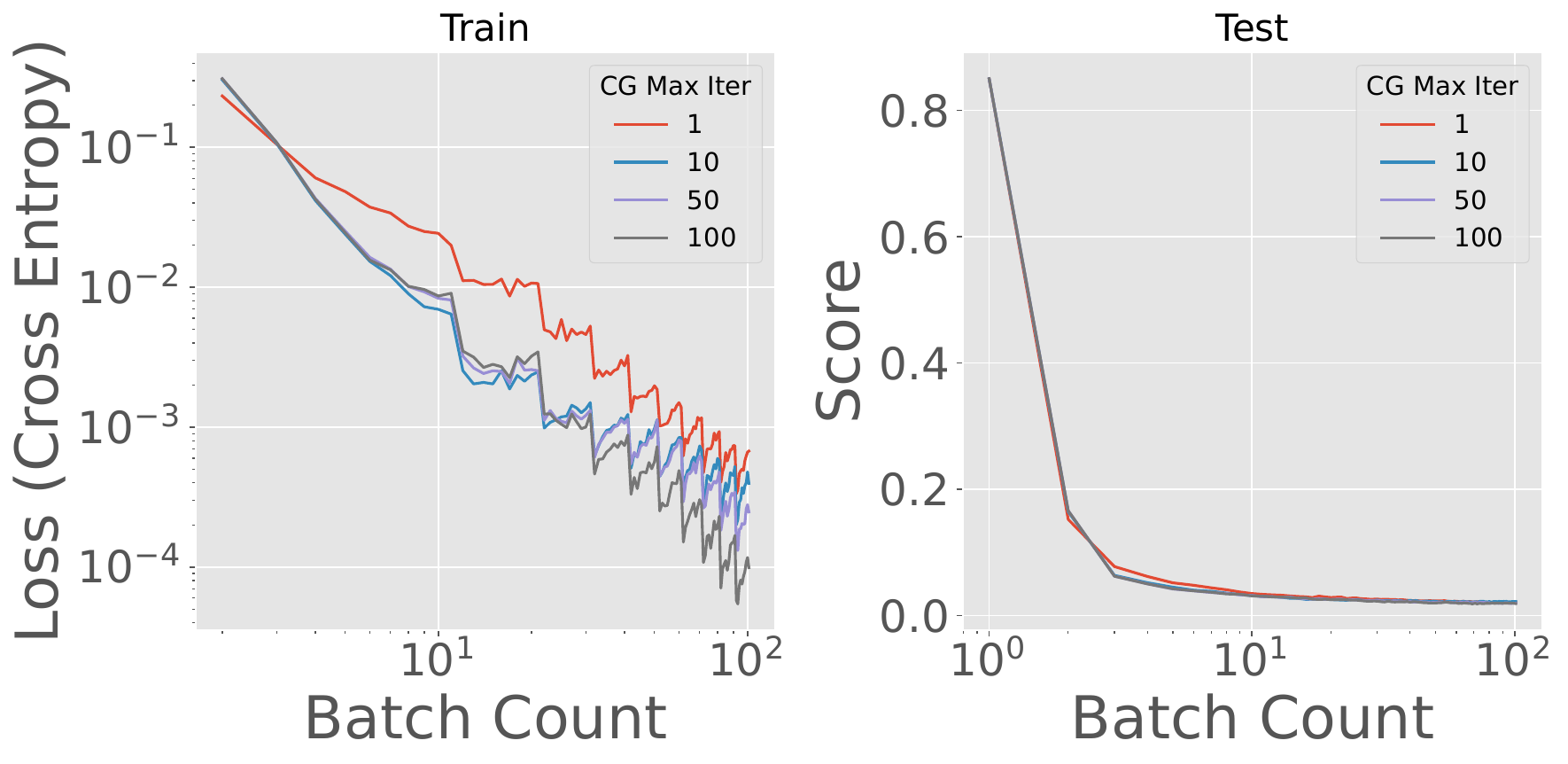}
    \caption{Varying the max cg iter at fixed batch size (Fashion-MNIST): We keep the batch sized fixed at batch size 0.1 for CGTP.  We see small improvements in terms of learning speed on the train set, but this does little for the test error.}
    \label{fig:vary-max-iter}
\end{figure}
\newpage
\newpage
\section{Algorithm Listings}
\noindent We include these listings for completeness. The trust-region update
(Algorithm~\ref{alg:tr}) and the preconditioned conjugate gradient solver
(Algorithm~\ref{alg:cg}) are standard and adapted from \citep{nocedal2006numerical}; what is specific to this work is the subproblem they
are applied to (Sections~\ref{sec:trust-region}-~\ref{sec:second-order-tp}).

\begin{algorithm}[H]
\caption{Trust-region outer loop with a pluggable subproblem solver (\textsc{Solve})}
\label{alg:tr}
\begin{algorithmic}[1]
\Require parameters $\theta$; initial damping $\lambda$; solver \textsc{Solve};
  stringency $\eta_{\mathrm{acc}}=0.15$, thresholds $\eta_{\mathrm{lo}}=0.25$, $\eta_{\mathrm{hi}}=0.75$;
  factors $\gamma_{\uparrow}=4$, $\gamma_{\downarrow}=0.25$;
  bounds $[\lambda_{\min},\lambda_{\max}]=[10^{-5},1024]$
\For{each outer iteration}
  \State Draw a batch; forward pass; one backward pass to form $r_n, H_n, J_n^{(\ell)}, g_n^{(\ell)}$
    \Comment{only global communication}
  \State $\alpha \gets \textsc{SimplexScheme}(\cdot)$ \Comment{Supp.~D.3}
  \For{each layer $\ell$ \textbf{in parallel}}
    \State $\Delta W^{(\ell-1)} \gets \textsc{Solve}(\ell;\,\lambda,\alpha)$ \Comment{GDTP, CGTP, KFAC, ...}
  \EndFor
  \State $\mathrm{pred} \gets m(0)-m(\Delta z)$, where $\Delta z^{(\ell)}=\tilde U^{(\ell-1)}\Delta W^{(\ell-1)}$
  \State $\mathrm{obsd} \gets J(\theta)-J(\theta+\Delta\theta)$
  \State $\rho \gets \mathrm{obsd}/\mathrm{pred}$ \Comment{Compute performance ratio}
  \If{$\rho>\eta_{\mathrm{acc}}$}
    \State $\theta \gets \theta+\Delta\theta$ \Comment{otherwise reject the step}
  \EndIf
  \If{$\rho<\eta_{\mathrm{lo}}$}
    \State $\lambda \gets \min(\gamma_{\uparrow}\lambda,\ \lambda_{\max})$
  \ElsIf{$\rho>\eta_{\mathrm{hi}}$}
    \State $\lambda \gets \max(\gamma_{\downarrow}\lambda,\ \lambda_{\min})$
  \EndIf
\EndFor
\end{algorithmic}
\end{algorithm}

\begin{algorithm}%
\caption{Preconditioned conjugate gradient for the layer-$\ell$ realisation problem}
\label{alg:cg}
\begin{algorithmic}[1]
\Require inputs $\tilde U^{(\ell-1)}$ with rows $u_n^{\top}$; per-sample
$J_n^{(\ell)}$, $M_n$, $r_n$, $\alpha_n^{(\ell)}$, $B_n^{(\ell)}$; trust
parameter $\lambda$; preconditioner damping $\delta \ge 0$; tolerance
$\varepsilon$; iteration budget $T$
\Ensure approximate solution $\Delta W^{(\ell-1)}$ of \eqref{eq:reduced-system}
\Statex
\Function{ApplyG}{$V$} \Comment{$\mathcal{G}^{(\ell)}_{\lambda}(V)$, matrix-free}
  \State $Y \gets \tilde U^{(\ell-1)} V$ \Comment{rows $y_n^{\top} = u_n^{\top} V$}
  \For{$n = 1, \dots, N$}
    \State $\hat y_n \gets \frac{1}{\alpha_n^{(\ell)}} J_n^{(\ell)\top} M_n J_n^{(\ell)} y_n + \lambda B_n^{(\ell)} y_n$ \Comment{one JVP, one VJP}
  \EndFor
  \State \Return $\tilde U^{(\ell-1)\top} \hat Y$
\EndFunction
\Statex
\Function{Precond}{$R$} \Comment{inverse of $(A + \delta I) \otimes \bar G$}
  \State \Return $(A + \delta I)^{-1} R \, \bar G^{-1}$ \Comment{two Cholesky solves}
\EndFunction
\Statex
\State $g_n \gets J_n^{(\ell)\top} M_n r_n$ for all $n$; \quad $\Gamma \gets -\sum_n u_n g_n^{\top}$
\State $A \gets \tilde U^{(\ell-1)\top} \tilde U^{(\ell-1)}$; \quad $\bar G \gets \frac{1}{N} \sum_n \tilde G_n^{(\ell)}(\lambda; \alpha)$
\State $\Delta z_n^{(\ell)\star} \gets -\tilde G_n^{(\ell)}(\lambda; \alpha)^{-1} g_n$ for all $n$ \Comment{targets}
\State $X \gets (A + \delta I)^{-1} \tilde U^{(\ell-1)\top} \Delta Z^{(\ell)\star}$ \Comment{warm start, or $X \gets 0$}
\State $R \gets \Gamma - \Call{ApplyG}{X}$; \quad $\hat R \gets \Call{Precond}{R}$; \quad $P \gets \hat R$; \quad $\rho \gets \langle R, \hat R \rangle$
\For{$t = 1, \dots, T$}
  \If{$\lVert R \rVert_{F} \le \varepsilon \lVert \Gamma \rVert_{F}$} \textbf{break} \EndIf
  \State $\Theta \gets \Call{ApplyG}{P}$
  \State $a \gets \rho / \langle P, \Theta \rangle$
  \State $X \gets X + a P$; \quad $R \gets R - a \Theta$
  \State $\hat R \gets \Call{Precond}{R}$; \quad $\rho' \gets \langle R, \hat R \rangle$
  \State $P \gets \hat R + (\rho' / \rho) P$; \quad $\rho \gets \rho'$
\EndFor
\State \Return $\Delta W^{(\ell-1)} \gets X$
\end{algorithmic}
\end{algorithm}

\end{appendix} 

\end{document}